\documentclass{article}

\usepackage{iclr2027_conference,times}
\iclrfinalcopy

\usepackage[sets,nn,operations,colors,tikz]{cora-macs}
\usetikzlibrary{positioning}  % for \node[below left=... and ... of ...]
\usetikzlibrary{shapes.callouts}  % for speech-bubble callouts
\usetikzlibrary{decorations.pathmorphing}  % for wavy / snake arrows
\usetikzlibrary{calc}  % for ($A+(dx,dy)$) coordinate arithmetic
\CORAexternalizeFiguresByDefault
\usepackage{hyperref}

\usepackage{xcolor}
\usepackage{lipsum}
\setlipsum{%
    par-before = \begingroup\color{lightgray},
    par-after = \endgroup,
    sentence-before = \begingroup\color{lightgray},
    sentence-after = \endgroup
}

\newcommand{\todoCiteVerify}[2][]{{\color{orange}\ifx\relax#1\relax\citep{#2}\else\citep[#1]{#2}\fi?}}

\usepackage{pifont}

\usepackage{amsmath}
\usepackage{mathtools}  % \coloneqq and friends (loads amsmath)
\DeclareMathOperator*{\argmin}{arg\,min}

\usepackage{amsthm}
\usepackage{cleveref}
\usepackage{amsfonts}
\usepackage{wasysym}

\newtheorem{proposition}{Proposition}

\theoremstyle{definition}
\newtheorem{definition}{Definition}
\theoremstyle{remark}

\crefname{section}{Sec.}{Sec.}
\crefname{subsection}{Sec.}{Sec.}
\crefname{figure}{Fig.}{Fig.}
\crefname{algorithm}{Alg.}{Alg.}
\crefname{table}{Tab.}{Tab.}
\crefname{example}{Ex.}{Ex.}
\crefname{equation}{Eq.}{Eq.}
\crefformat{equation}{Eq.~#2#1#3}
\Crefformat{equation}{Eq.~#2#1#3}
\crefrangeformat{equation}{Eqs.~#3#1#4 to~#5#2#6}
\Crefrangeformat{equation}{Eqs.~#3#1#4 to~#5#2#6}
\crefmultiformat{equation}{Eqs.~#2#1#3}{ and~#2#1#3}{, #2#1#3}{, and~#2#1#3}
\Crefmultiformat{equation}{Eqs.~#2#1#3}{ and~#2#1#3}{, #2#1#3}{, and~#2#1#3}
\crefname{definition}{Def.}{Def.}
\crefname{proposition}{Prop.}{Prop.}
\crefname{corollary}{Cor.}{Cor.}
\crefname{theorem}{Thm.}{Thm.}
\crefname{lemma}{Lemma}{Lemmas}
\crefname{appendix}{Appendix}{Appendix}

\renewcommand{\Cref}[1]{\cref{#1}}

\usepackage{booktabs} % requires 'booktabs' package
\usepackage{amstext} % for \text macro in headers
\usepackage{array} % for \newcolumntype macro
\usepackage{multirow} % for \multirow in ablation tables

\newcolumntype{L}{>{$}l<{$}} % math-mode version of "l" column type
\newcolumntype{R}{>{$}r<{$}} % math-mode version of "r" column type
\newcolumntype{C}{>{$}c<{$}} % math-mode version of "c" column type

\usepackage{xfrac}

\usepackage{algorithm}
\usepackage[noend]{algpseudocode}
\usepackage[commentColor=black, italicComments=false]{algpseudocodex}
\AtBeginEnvironment{algorithm}{\tikzexternaldisable}
\AtEndEnvironment{algorithm}{\tikzexternalenable}

\usepackage{thm-restate}

\newcommand{\token}[1]{\texttt{\small #1}}

\newcommand{\mathOp}[1]{\ensuremath{\mathrm{#1}}}
\newcommand{\atLayer}[2][\layerIdx]{\ensuremath{#2_{#1}}}
\newcommand{\sctag}[1]{\textsc{#1}}

\newcommand{\IRN}{\text{IRN}}                       % interpretable replacement network
\newcommand{\TC}{\text{TC}}                         % transcoder
\newcommand{\SAE}{\text{SAE}}                       % sparse autoencoder
\newcommand{\MLP}{\text{MLP}}                       % MLP block
\newcommand{\Attn}{\text{Attn}}                     % attention block

\newcommand{\numTokens}{\tau}                           % number of tokens
\newcommand{\numModelDim}{d_\textnormal{model}}         % residual-stream dimension (d_model)
\newcommand{\numMLPdim}{\ensuremath{d_{\MLP}}}          % MLP hidden width (post-gating where applicable)
\newcommand{\numIRNdim}{\ensuremath{d_{\IRN}}}          % IRN dictionary size (formerly m / d_feat)
\newcommand{\numLayersMLP}{\ensuremath{\numLayers_{\MLP}}} % number of MLP layers (kappa_MLP)
\newcommand{\layerIdx}{k}                               % current layer index
\newcommand{\layerMLP}[1][\layerIdx]{\atLayer[#1]{\MLP}}
\newcommand{\layerIRN}[1][\layerIdx]{\atLayer[#1]{\IRN}}
\newcommand{\layerTC}[1][\layerIdx]{\atLayer[#1]{\TC}}
\newcommand{\layerSAE}[1][\layerIdx]{\atLayer[#1]{\SAE}}
\newcommand{\hAt}[1][\layerIdx]{\atLayer[#1]{H}}              % residual stream (d x #token) after layer k
\newcommand{\hMid}[1][\layerIdx]{\atLayer[#1]{H'}}            % post-attention, pre-MLP intermediate
\newcommand{\hAdv}[1][\layerIdx]{\atLayer[#1]{\widetilde{H}}} % adversarial / IRN-approx

\newcommand{\nnOutBound}{\delta}
\newcommand{\jacc}{\ensuremath{J}}                      % top-K Jaccard
\newcommand{\jaccTrue}{\ensuremath{\jacc^*}}    % attack-derived upper bound on worst-case J
\newcommand{\jaccUp}{\ensuremath{\overline{\jacc}}}    % attack-derived upper bound on worst-case J
\newcommand{\jaccLo}{\ensuremath{\underline{\jacc}}}   % certified lower bound on worst-case J
\newcommand{\numTop}{\ensuremath{K}}                    % number of top features (= 20); K, distinct from layer index \layerIdx=k
\newcommand{\topk}{\ensuremath{\text{top-}\numTop}}

\newcommand{\pertMinor}{\sctag{minor}}
\newcommand{\pertMedium}{\sctag{medium}}
\newcommand{\pertMajor}{\sctag{major}}

\newcommand{\deltaTrue}{\delta^*}
\newcommand{\deltaVerified}{\overline{\delta}}
\newcommand{\deltaPGD}{\underline{\delta}}
\newcommand{\nFree}{K'}
\newcommand{\Tact}{T_{\mathrm{act}}}
\newcommand{\Tina}{T_{\mathrm{inact}}}
\newcommand{\Tund}{T_{\mathrm{und}}}
\newcommand{\trainStd}{\sctag{Std}}
\newcommand{\trainPGD}{\sctag{PGD}}                       % adversarial / Madry-style
\newcommand{\trainSet}{\sctag{Set}}                       % set-based / verification-aware
\newcommand{\trainLoRA}{\sctag{LoRA}}                     % low-rank update variant (suffix)

\newcommand{\GPT}{\mbox{GPT-2}}
\newcommand{\GemmaTwo}{\mbox{Gemma 2 2B}}
\newcommand{\GemmaThree}{\mbox{Gemma 3 1B}}
\newcommand{\Llama}{\mbox{Llama 3.2 1B}}
\newcommand{\Qwen}{\mbox{R1-Distill-Qwen 1.5B}}
\newcommand{\QwenShort}{\mbox{Qwen 1.5B}}
\newcommand{\GemmaScope}{\mbox{Gemma Scope}}

\newcommand{\ReLU}{\mathOp{ReLU}}
\newcommand{\GELU}{\mathOp{GELU}}
\newcommand{\GeGLU}{\mathOp{GeGLU}}
\newcommand{\SwiGLU}{\mathOp{SwiGLU}}
\newcommand{\JumpReLU}{\mathOp{JumpReLU}}
\newcommand{\TopK}{\mathOp{TopK}}

\makeatletter
\AtBeginDocument{%
    \@ifundefined{iclrfinaltrue}{}{%
        \@ifundefined{tikzexternalrealjob}{}{%
            \edef\@tikzrobot@jn{\jobname}%
            \edef\@tikzrobot@ej{\tikzexternalrealjob}%
            \ifx\@tikzrobot@jn\@tikzrobot@ej\else\iclrfinaltrue\fi%
        }%
    }%
}
\makeatother

\newcommand{\robotClaw}{%
    \draw[outline, fill=\robotcolor!55, line width=\lwd pt]
        (-0.08, 0.00) -- (-0.08, -0.08) -- (-0.02, -0.08) -- (-0.02, -0.04) --
        ( 0.02, -0.04) -- ( 0.02, -0.08) -- ( 0.08, -0.08) -- ( 0.08, 0.00) -- cycle;
}

\tikzset{
    pics/robot/.style = {
        code = {%
            \tikzset{robot/.cd, #1}%
            \edef\robotcolor{\pgfkeysvalueof{/tikz/robot/color}}%
            \edef\robotpose{\pgfkeysvalueof{/tikz/robot/pose}}%
            \edef\robotface{\pgfkeysvalueof{/tikz/robot/face}}%
            \edef\robotbody{\pgfkeysvalueof{/tikz/robot/body}}%
            \edef\robotheadtilt{\pgfkeysvalueof{/tikz/robot/headtilt}}%
            \edef\robotheadshift{\pgfkeysvalueof{/tikz/robot/headshift}}%
            \edef\robotlarm{\pgfkeysvalueof{/tikz/robot/larm}}%
            \edef\robotrarm{\pgfkeysvalueof{/tikz/robot/rarm}}%
            \edef\robotbob{\pgfkeysvalueof{/tikz/robot/bob}}%

            \def\faceKind{\robotpose}%       default: face = pose name
            \def\poseLeftArm{normal}%        normal | up | extended-left | bent-out
            \def\poseRightArm{normal}%       normal | up | extended-right | bent-up | bent-out

            \ifthenelse{\equal{\robotbody}{auto}}{%
                \edef\effectiveBody{\robotpose}%
            }{%
                \edef\effectiveBody{\robotbody}%
            }%
            \ifthenelse{\equal{\effectiveBody}{thinking}}{%
                \def\poseRightArm{bent-up}%
            }{}%
            \ifthenelse{\equal{\effectiveBody}{pointing-right}}{%
                \def\poseRightArm{extended-right}%
            }{}%
            \ifthenelse{\equal{\effectiveBody}{pointing-left}}{%
                \def\poseLeftArm{extended-left}%
            }{}%
            \ifthenelse{\equal{\effectiveBody}{waving}}{%
                \def\poseRightArm{up}%
            }{}%
            \ifthenelse{\equal{\effectiveBody}{cheering}}{%
                \def\poseLeftArm{up-out}\def\poseRightArm{up-out}%
            }{}%
            \ifthenelse{\equal{\effectiveBody}{shrug}}{%
                \def\poseLeftArm{bent-out}\def\poseRightArm{bent-out}%
            }{}%

            \ifthenelse{\equal{\robotface}{auto}}{%
                \ifthenelse{\equal{\robotpose}{pointing-right}}{\def\faceKind{neutral}}{}%
                \ifthenelse{\equal{\robotpose}{pointing-left}}{\def\faceKind{neutral}}{}%
                \ifthenelse{\equal{\robotpose}{waving}}{\def\faceKind{happy}}{}%
                \ifthenelse{\equal{\robotpose}{cheering}}{\def\faceKind{happy}}{}%
                \ifthenelse{\equal{\robotpose}{shrug}}{\def\faceKind{confused}}{}%
            }{%
                \def\faceKind{\robotface}%
            }%

            \pgfgettransformentries{\rmxx}{\rmxy}{\rmyx}{\rmyy}{\rmtx}{\rmty}%
            \pgfmathsetmacro{\robotscale}{sqrt(abs(\rmxx*\rmyy-\rmxy*\rmyx))}%
            \pgfmathsetmacro{\lwa}{0.65*\robotscale}%
            \pgfmathsetmacro{\lwb}{0.50*\robotscale}%
            \pgfmathsetmacro{\lwc}{0.45*\robotscale}%
            \pgfmathsetmacro{\lwd}{0.40*\robotscale}%
            \pgfmathsetmacro{\lwe}{0.30*\robotscale}%
            \pgfmathsetmacro{\lwf}{0.25*\robotscale}%
            \begin{scope}[
                line cap=butt, line join=miter,
                line width=\lwd pt,   % scaled default for any path without an explicit width
                outline/.style={draw=black, line width=\lwa pt},
                screw/.style  ={draw=black!70, line width=\lwf pt, fill=black!55},
                ledLit/.style ={fill=\robotcolor!50},
            ]
                \fill[black, opacity=0.10] (0, -0.79) ellipse (0.32 and 0.05);
                \fill[black, opacity=0.18] (0, -0.79) ellipse (0.22 and 0.035);

                \begin{scope}[shift={(0, \robotbob)}]

                \begin{scope}[shift={(\robotheadshift, 0)},
                              rotate around={\robotheadtilt:(0, -0.02)}]
                \draw[outline] (-0.22, 0.65) -- (-0.32, 0.86);
                \draw[outline, fill=\robotcolor!75!black]
                    (-0.32, 0.89) circle (0.050);
                \fill[white, opacity=0.6] (-0.335, 0.905) circle (0.014);
                \draw[outline] ( 0.22, 0.65) -- ( 0.32, 0.86);
                \draw[outline, fill=\robotcolor!75!black]
                    ( 0.32, 0.89) circle (0.050);
                \fill[white, opacity=0.6] ( 0.305, 0.905) circle (0.014);

                \draw[outline, fill=\robotcolor!30]
                    (-0.40, 0.65) -- (-0.50, 0.55) -- (-0.50, 0.08) --
                    (-0.40, -0.02) -- ( 0.40, -0.02) -- ( 0.50, 0.08) --
                    ( 0.50, 0.55) -- ( 0.40, 0.65) -- cycle;

                \draw[screw] (-0.41, 0.56) circle (0.022);
                \draw[screw] ( 0.41, 0.56) circle (0.022);
                \draw[screw] (-0.41, 0.05) circle (0.022);
                \draw[screw] ( 0.41, 0.05) circle (0.022);

                \draw[outline, fill=black!85, line width=\lwd pt]
                    (-0.36, 0.55) rectangle (0.36, 0.10);
                \fill[\robotcolor!20] (-0.32, 0.51) rectangle (0.32, 0.14);

               \node[transform shape] at (0, 0.38) {\textbf{AI}};

                \coordinate (-mouth) at (0, 0.20);
                \ifthenelse{\equal{\faceKind}{happy}}{%
                    \fill[ledLit] (-0.10, 0.20) rectangle (-0.04, 0.225);
                    \fill[ledLit] (-0.03, 0.18) rectangle ( 0.03, 0.205);
                    \fill[ledLit] ( 0.04, 0.20) rectangle ( 0.10, 0.225);
                }{}%
                \ifthenelse{\equal{\faceKind}{neutral}}{%
                    \fill[ledLit] (-0.10, 0.19) rectangle ( 0.10, 0.215);
                }{}%
                \ifthenelse{\equal{\faceKind}{confused}}{%
                    \fill[ledLit] (-0.10, 0.18) rectangle (-0.04, 0.205);
                    \fill[ledLit] (-0.03, 0.21) rectangle ( 0.03, 0.235);
                    \fill[ledLit] ( 0.04, 0.18) rectangle ( 0.10, 0.205);
                }{}%
                \ifthenelse{\equal{\faceKind}{surprised}}{%
                    \draw[outline, fill=black, line width=\lwe pt]
                        (-0.05, 0.18) rectangle (0.05, 0.24);
                }{}%
                \ifthenelse{\equal{\faceKind}{sad}}{%
                    \fill[ledLit] (-0.10, 0.205) rectangle (-0.04, 0.225);
                    \fill[ledLit] (-0.03, 0.225) rectangle ( 0.03, 0.245);
                    \fill[ledLit] ( 0.04, 0.205) rectangle ( 0.10, 0.225);
                }{}%
                \ifthenelse{\equal{\faceKind}{thinking}}{%
                    \fill[ledLit] (-0.08, 0.19) rectangle (-0.05, 0.215);
                    \fill[ledLit] (-0.015,0.19) rectangle ( 0.015,0.215);
                    \fill[ledLit] ( 0.05, 0.19) rectangle ( 0.08, 0.215);
                }{}%

                \fill[\robotcolor!90!black]    (-0.12, 0.04) circle (0.018);
                \fill[\robotcolor!70!black]    ( 0.00, 0.04) circle (0.018);
                \fill[\robotcolor!50!black]    ( 0.12, 0.04) circle (0.018);

                \begin{scope}
                    \clip (-0.40, 0.65) -- (-0.50, 0.55) -- (-0.50, 0.08) --
                          (-0.40, -0.02) -- ( 0.40, -0.02) -- ( 0.50, 0.08) --
                          ( 0.50, 0.55) -- ( 0.40, 0.65) -- cycle;
                    \fill[white, opacity=0.30]
                        (-0.40, 0.65) -- ( 0.10, 0.65) -- (-0.20, 0.48) --
                        (-0.50, 0.48) -- (-0.50, 0.55) -- cycle;
                \end{scope}

                \end{scope}% end head group

                \draw[outline, fill=\robotcolor!25, line width=\lwc pt]
                    (-0.08, -0.02) rectangle (0.08, -0.07);

                \draw[outline, fill=\robotcolor!25]
                    (-0.28, -0.07) -- (-0.32, -0.13) -- (-0.32, -0.42) --
                    (-0.24, -0.50) -- ( 0.24, -0.50) -- ( 0.32, -0.42) --
                    ( 0.32, -0.13) -- ( 0.28, -0.07) -- cycle;

                \draw[outline, fill=\robotcolor!40, line width=\lwd pt]
                    (-0.18, -0.15) rectangle (0.18, -0.38);
                \draw[\robotcolor!70!black, line width=\lwe pt] (-0.14,-0.20) -- (0.14,-0.20);
                \draw[\robotcolor!70!black, line width=\lwe pt] (-0.14,-0.27) -- (0.14,-0.27);
                \draw[\robotcolor!70!black, line width=\lwe pt] (-0.14,-0.34) -- (0.14,-0.34);
                \fill[black!70] (-0.16,-0.17) circle (0.012);
                \fill[black!70] ( 0.16,-0.17) circle (0.012);
                \fill[black!70] (-0.16,-0.36) circle (0.012);
                \fill[black!70] ( 0.16,-0.36) circle (0.012);
                \draw[outline, fill=\robotcolor!85!black, line width=\lwe pt]
                    (0,-0.45) circle (0.022);

                \draw[outline, fill=\robotcolor!40, line width=\lwb pt]
                    (-0.50, -0.08) rectangle (-0.32, -0.22);

                \begin{scope}[rotate around={\robotlarm:(-0.41, -0.22)}]
                \ifthenelse{\equal{\poseLeftArm}{normal}}{%
                    \draw[outline, fill=\robotcolor!75!black, line width=\lwd pt]
                        (-0.41, -0.24) circle (0.045);
                    \draw[outline, fill=\robotcolor!40, line width=\lwb pt]
                        (-0.47, -0.28) rectangle (-0.35, -0.50);
                    \begin{scope}[shift={(-0.41, -0.50)}] \robotClaw \end{scope}
                }{}%
                \ifthenelse{\equal{\poseLeftArm}{up}}{%
                    \draw[outline, fill=\robotcolor!75!black, line width=\lwd pt]
                        (-0.41, -0.24) circle (0.045);
                    \draw[outline, fill=\robotcolor!40, line width=\lwb pt]
                        (-0.47, -0.20) rectangle (-0.35, 0.30);
                    \begin{scope}[shift={(-0.41, 0.30)}, rotate=180] \robotClaw \end{scope}
                }{}%
                \ifthenelse{\equal{\poseLeftArm}{up-out}}{%
                    \draw[outline, fill=\robotcolor!75!black, line width=\lwd pt]
                        (-0.41, -0.24) circle (0.045);
                    \begin{scope}[rotate around={22:(-0.41, -0.22)}]
                        \draw[outline, fill=\robotcolor!40, line width=\lwb pt]
                            (-0.47, -0.20) rectangle (-0.35, 0.30);
                        \begin{scope}[shift={(-0.41, 0.30)}, rotate=180] \robotClaw \end{scope}
                    \end{scope}
                }{}%
                \ifthenelse{\equal{\poseLeftArm}{extended-left}}{%
                    \draw[outline, fill=\robotcolor!40, line width=\lwb pt]
                        (-0.70, -0.20) rectangle (-0.50, -0.10);
                    \draw[outline, fill=\robotcolor!75!black, line width=\lwd pt]
                        (-0.50, -0.15) circle (0.045);
                    \begin{scope}[shift={(-0.70, -0.15)}, rotate=-90] \robotClaw \end{scope}
                }{}%
                \ifthenelse{\equal{\poseLeftArm}{bent-out}}{%
                    \draw[outline, fill=\robotcolor!75!black, line width=\lwd pt]
                        (-0.41, -0.24) circle (0.045);
                    \draw[outline, fill=\robotcolor!40, line width=\lwb pt]
                        (-0.65, -0.30) rectangle (-0.45, -0.18);
                    \begin{scope}[shift={(-0.65, -0.24)}, rotate=-135] \robotClaw \end{scope}
                }{}%

                \end{scope}% end left arm

                \draw[outline, fill=\robotcolor!40, line width=\lwb pt]
                    ( 0.32, -0.08) rectangle ( 0.50, -0.22);

                \begin{scope}[rotate around={\robotrarm:(0.41, -0.22)}]
                \ifthenelse{\equal{\poseRightArm}{normal}}{%
                    \draw[outline, fill=\robotcolor!75!black, line width=\lwd pt]
                        ( 0.41, -0.24) circle (0.045);
                    \draw[outline, fill=\robotcolor!40, line width=\lwb pt]
                        ( 0.35, -0.28) rectangle ( 0.47, -0.50);
                    \begin{scope}[shift={( 0.41, -0.50)}] \robotClaw \end{scope}
                }{}%
                \ifthenelse{\equal{\poseRightArm}{up}}{%
                    \draw[outline, fill=\robotcolor!75!black, line width=\lwd pt]
                        ( 0.41, -0.24) circle (0.045);
                    \draw[outline, fill=\robotcolor!40, line width=\lwb pt]
                        ( 0.35, -0.20) rectangle ( 0.47, 0.30);
                    \begin{scope}[shift={( 0.41, 0.30)}, rotate=180] \robotClaw \end{scope}
                }{}%
                \ifthenelse{\equal{\poseRightArm}{up-out}}{%
                    \draw[outline, fill=\robotcolor!75!black, line width=\lwd pt]
                        ( 0.41, -0.24) circle (0.045);
                    \begin{scope}[rotate around={-22:(0.41, -0.22)}]
                        \draw[outline, fill=\robotcolor!40, line width=\lwb pt]
                            ( 0.35, -0.20) rectangle ( 0.47, 0.30);
                        \begin{scope}[shift={( 0.41, 0.30)}, rotate=180] \robotClaw \end{scope}
                    \end{scope}
                }{}%
                \ifthenelse{\equal{\poseRightArm}{extended-right}}{%
                    \draw[outline, fill=\robotcolor!40, line width=\lwb pt]
                        ( 0.50, -0.20) rectangle ( 0.70, -0.10);
                    \draw[outline, fill=\robotcolor!75!black, line width=\lwd pt]
                        ( 0.50, -0.15) circle (0.045);
                    \begin{scope}[shift={( 0.70, -0.15)}, rotate=90] \robotClaw \end{scope}
                }{}%
                \ifthenelse{\equal{\poseRightArm}{bent-up}}{%
                    \draw[outline, fill=\robotcolor!75!black, line width=\lwd pt]
                        ( 0.41, -0.24) circle (0.045);
                    \begin{scope}[rotate around={-6:( 0.41, -0.20)}]
                        \draw[outline, fill=\robotcolor!40, line width=\lwb pt]
                            ( 0.35, -0.20) rectangle ( 0.47, 0.30);
                        \begin{scope}[shift={( 0.41, 0.30)}, rotate=180] \robotClaw \end{scope}
                    \end{scope}
                }{}%
                \ifthenelse{\equal{\poseRightArm}{bent-out}}{%
                    \draw[outline, fill=\robotcolor!75!black, line width=\lwd pt]
                        ( 0.41, -0.24) circle (0.045);
                    \draw[outline, fill=\robotcolor!40, line width=\lwb pt]
                        ( 0.45, -0.30) rectangle ( 0.65, -0.18);
                    \begin{scope}[shift={( 0.65, -0.24)}, rotate=135] \robotClaw \end{scope}
                }{}%

                \end{scope}% end right arm

                \draw[outline, fill=\robotcolor!75!black, line width=\lwd pt]
                    (-0.13, -0.50) circle (0.040);
                \draw[outline, fill=\robotcolor!75!black, line width=\lwd pt]
                    ( 0.13, -0.50) circle (0.040);
                \draw[outline, fill=\robotcolor!40, line width=\lwb pt]
                    (-0.17, -0.54) rectangle (-0.09, -0.74);
                \draw[outline, fill=\robotcolor!40, line width=\lwb pt]
                    ( 0.09, -0.54) rectangle ( 0.17, -0.74);
                \draw[outline, fill=\robotcolor!75!black, line width=\lwb pt]
                    (-0.21, -0.78) rectangle (-0.05, -0.74);
                \draw[outline, fill=\robotcolor!75!black, line width=\lwb pt]
                    ( 0.05, -0.78) rectangle ( 0.21, -0.74);

                \end{scope}% end bob group

                \node[inner sep=0pt, outer sep=0pt] (-head) at (0, 0.32) {};
                \coordinate (-top)                 at ( 0.00, 0.65);
                \coordinate (-anchor-bubble-right) at ( 0.50, 0.65);
                \coordinate (-anchor-bubble-left)  at (-0.50, 0.65);
                \coordinate (-anchor-bubble)       at ( 0.50, 0.65);
                \coordinate (-side-right)          at ( 0.50, 0.32);
                \coordinate (-side-left)           at (-0.50, 0.32);
                \coordinate (-foot)  at (0, -0.78);
                \coordinate (-lhand) at (-0.41, -0.58);
                \coordinate (-rhand) at ( 0.41, -0.58);
            \end{scope}
        }%
    },
    /tikz/robot/.search also = {/tikz},
    /tikz/robot/color/.initial = CORAcolor1,
    /tikz/robot/pose/.initial  = happy,
    /tikz/robot/face/.initial  = auto,
    /tikz/robot/body/.initial  = auto,
    /tikz/robot/headtilt/.initial  = 0,
    /tikz/robot/headshift/.initial = 0,
    /tikz/robot/larm/.initial      = 0,
    /tikz/robot/rarm/.initial      = 0,
    /tikz/robot/bob/.initial       = 0,
}

\tikzset{
    speech/.style 2 args={
        rectangle callout,
        callout absolute pointer={(#1)},
        callout pointer width=4pt,
        draw=#2, line width=0.8pt, fill=white,
        rounded corners=3pt,
        inner sep=5pt,
        font=\small\sffamily,
        align=center,
    },
}

\title{The Misery of Mechanistic Interpretability: \\ A Formal Perspective}

\author{Tobias Ladner\ \ {\small\normalfont (\texttt{tobias.ladner@tum.de})} \\
Technical University of Munich, Germany
\And
Matthias Althoff\ \ {\small\normalfont (\texttt{althoff@tum.de})}  \\
Technical University of Munich, Germany
}

\begin{document}
\maketitle
\lhead{Preprint. Under review.}   % must come after \maketitle

% =========================================================================

\begin{abstract}
Mechanistic interpretability has become the dominant lens for understanding frontier language models,
as their inner workings are complex and inherently black boxes.
To gain insights into these models, interpretable replacement networks (\IRN{}s) are trained at all layers,
exposing interpretable features through sparsely activated neurons.
However, the faithfulness of an \IRN{} is usually evaluated only empirically on clean data,
and we show that even semantically minor input perturbations flip the dominant \IRN{} features%
---and thus the human-understandable interpretation---%
across five open-weight model families (\GPT{}, \GemmaTwo{}, \GemmaThree{}, \Llama{}, \Qwen{}).
We propose the first formal verification framework for the faithfulness of an \IRN{},
where reachability analysis certifies a sound upper bound on the faithfulness gap in adversarial scenarios.
Moreover, we show that verification-aware training of \IRN{}s substantially tightens this certified bound,
restoring a feature-level interpretation that safety auditors can act on.
Together, these results give, to the best of our knowledge, the first formal guarantees for mechanistic interpretability of large language models.
\end{abstract}

% keywords
% ai safety, large language models, mechanistic interpretability, neural network verification, formal methods, faithfulness, guarantees

\section{Introduction}
\label{sec:intro}

Most interpretations of a large language model carry a quiet asterisk: They are not direct interpretations of the model.
When we read that a neuron ``fires for the Golden Gate Bridge''~\citep{templeton2024scaling} or that a circuit ``implements a comparison operator''~\citep{dunefsky2024transcoders},
these claims are often read off an \emph{interpretable replacement network} (\IRN{})---an interpretable network to stand in for a dense computation the model actually performs~\citep{marks2025feature,somvanshi2026bridging}.
This is necessary as the dense computations of the underlying model result in polysemantic neurons~\citep{elhage2022superposition},
whereas the \IRN{} exposes a short list of human-understandable features.
Crucially, these features are only as trustworthy as that stand-in is \emph{faithful}, i.e., the computation of the \IRN{} matches the underlying model.

However, training the \IRN{} to match the underlying model usually results in an approximate stand-in.
While the \IRN{} can be patched for individual inputs~\citep{lindsey2025circuit_tracing},
whether an interpretation survives an adversarial input is barely tested or verified%
---even though adversarial examples have been known to exist for a decade~\citep{goodfellow2015adversarial}, including for language models~\citep{dong2025safeguarding}.
Moreover, individual patches defeat the purpose of mechanistic interpretability to a certain degree as it aims to replicate the true \emph{mechanism} of the underlying model.

\Cref{fig:teaser} shows how such a faulty \IRN{} can deceive a safety auditor:
On the clean input sentence, the auditor would read off features surrounding ``cultivating a pathogen''---a signal the safety auditor would use to refuse the request.
We then perturb the input slightly by paraphrasing the sentence, such that the dominant features now describe ``workplace culture''---and the auditor would wave the request through.
Crucially, nothing of the model changed; what changed is the exposed interpretation by the \IRN{}.
The full example is provided in \cref{sec:teaser_example}.

It is worth noting that there are also interpretations that operate on the actual model.
For example, circuit tracing methods aim to identify the active subnetworks for any input~\citep{somvanshi2026bridging};
however, those methods do not scale to frontier language models yet~\citep{gao2025weight}.
Instead, dense computations are replaced by \IRN{}s and circuit tracing is applied on this modified model~\citep{lindsey2025circuit_tracing}---making the faithfulness of \IRN{}s ever more important.
Further details are provided in \Cref{sec:related_work}, and we focus for the remainder of this work on the faithfulness of \IRN{}s.

\paragraph{Contributions.}
To summarize, our main contributions are:
\begin{enumerate}
    \item We first show that \IRN{}s are highly sensitive to adversarial attacks across various model families and \IRN{} types: \GPT, \GemmaTwo, \GemmaThree, \Llama, and \Qwen{} (\cref{sec:fragility}).
    \item Subsequently, we establish a formal verification framework for the faithfulness of an \IRN{} (\Cref{sec:formal}).
          In particular, we show that with the recent progress in formal methods, formal neural network verification can also be utilized in large language models.
    \item Finally, we show that verification-aware training substantially narrows the faithfulness gap of \IRN{}s (\cref{sec:way_forward}).
\end{enumerate}

% --- teaser figure -----------------------------------------------------------
\begin{figure}[t]
    \centering
    \includetikz{./figures/teaser/teaser}
    \caption{
        \textbf{Same meaning, different interpretations.}
        On the clean input (left), the most active features describe a ``pathogen'' sense of \token{culture},
        and a feature-based safety auditor reading those features blocks the request.
        With a minor synonym attack (right), the dominant features instead describe a ``workplace'' sense,
        and the same auditor would let the request pass.
        % Full example in \cref{sec:teaser_example}.
    }
    \label{fig:teaser}
\end{figure}

\section{Background on Mechanistic Interpretability}
\label{sec:background}

Modern language models~\citep{vaswani2017attention, elhage2021framework} are typically composed of a token embedding layer,
a stack of $\numLayers$ blocks operating on a residual stream,
and a decoding layer to predict the next token.
Each block $\layerIdx$ has two sublayers (\Cref{fig:irn_setup}):
A multi-head attention sublayer that mixes information \emph{across} tokens,
and a multi-layer perceptron (\MLP{}) that applies a nonlinear transformation to each token \emph{independently}.
Both add their output back into the residual stream, so we can write the layer-$\layerIdx$ update as \citep[Eq.~1 \& 2]{vaswani2017attention}
\begin{equation}
    \hMid \;=\; \hAt[\layerIdx-1] + \atLayer{\Attn}\!\bigl(\hAt[\layerIdx-1]\bigr),
    \qquad
    \hAt \;=\; \hMid + \layerMLP(\hMid),
\end{equation}
where $\hAt \in \mathbb{R}^{\numModelDim \times \numTokens}$ collects the residual-stream vectors of
the $\numTokens$ input tokens as columns, and $\hMid\in \mathbb{R}^{\numModelDim \times \numTokens}$ is the post-attention intermediate.
We are aware that this is a simplified description of the architecture of frontier models;
however, insofar as these models are not proprietary, most model families can be described under this framework~\citep{radford2019gpt2, gemmateam2024gemma2, gemmateam2025gemma3, grattafiori2024llama3, qwen2025qwen25},
and this high-level description is sufficient for our work.

An attention layer can usually be interpreted directly as it computes an interpretable relation between tokens~\citep[Eq.~1]{vaswani2017attention}.
However, the \MLP{} is hard to interpret:
Its neurons are densely entangled and polysemantic \citep{elhage2022superposition}, i.e., each fires for unrelated inputs.
The dominant remedy is to fit an interpretable replacement network (\IRN{})~\citep{somvanshi2026bridging},
whose activations are sparse and individually meaningful,
and to read interpretations off the \IRN{} features as a proxy for the model.
We consider two dominant variants in this work, sparse autoencoders and transcoders (\Cref{fig:irn_setup}), which are also available for many open-weight models.

\paragraph{Sparse autoencoder (\SAE{}).}
A sparsely activated autoencoder~\citep{cunningham2023saes} reconstructs its input at layer $\layerIdx$:
\begin{equation}\label{eq:sae}
    \layerSAE(\hAt) \;=\; W_\layerIdx^{\mathrm{dec}}\,
    \nnActFun\!\bigl(W_\layerIdx^{\mathrm{enc}}\, \hAt + b_\layerIdx^{\mathrm{enc}}\bigr) + b_\layerIdx^{\mathrm{dec}}
    \;\approx\; \hAt,
\end{equation}
where the parameters $W_\layerIdx^{\mathrm{enc}}\in \mathbb{R}^{\numIRNdim \times \numModelDim}$, $W_\layerIdx^{\mathrm{dec}}\in \mathbb{R}^{\numModelDim \times \numIRNdim }$, $b_\layerIdx^{\mathrm{enc}}\in\R^{\numIRNdim}$, $b_\layerIdx^{\mathrm{dec}}\in\R^{\numModelDim}$ are learned per layer.

\paragraph{Transcoder (\TC{}).}
A transcoder \citep{dunefsky2024transcoders} at layer $\layerIdx$ reconstructs the output of the multi-layer perceptron from the same input:
\begin{equation}\label{eq:tc}
    \layerTC(\hMid) \;=\; W_\layerIdx^{\mathrm{dec}}\,
    \nnActFun\!\bigl(W_\layerIdx^{\mathrm{enc}}\, \hMid + b_\layerIdx^{\mathrm{enc}}\bigr) + b_\layerIdx^{\mathrm{dec}}
    \;\approx\; \layerMLP(\hMid),
\end{equation}
again with per-layer parameters $W_\layerIdx^{\mathrm{enc}}, W_\layerIdx^{\mathrm{dec}}, b_\layerIdx^{\mathrm{enc}}, b_\layerIdx^{\mathrm{dec}}$.
Transcoders can also be constructed across different computation blocks~\citep{lindsey2025circuit_tracing}.

The activation function $\nnActFun$ varies by model family:
For example, \ReLU{} is used for the \GPT{}'s sparse autoencoder and transcoder~\citep{dunefsky2024transcoders},
\JumpReLU{} for the \GemmaScope{}~\citep{lieberum2024gemmascope},
and \TopK{} for the Llama Scope~\citep{he2024llamascope}.

A mechanistic interpretation at layer $\layerIdx$ for an input sentence is then read off the active \IRN{} features,
e.g., the indices $i$ with $\nnActFun(\cdot)_{(i)} > 0$,
where a sparsity penalty during training ensures that only a small subset of the hidden features ($\numIRNdim \gg \numModelDim$) fires for any given input.
Subsequently, each feature automatically obtains a human-readable label by summarizing the inputs that activate it \citep{paulo2024autointerp}.
This feature-label catalog is also often available along with the \IRN{}~\citep{lin2024neuronpedia}.

\begin{figure}[t]
    \centering
    \includetikz{./figures/irn_setup/irn_setup}
    \caption{
        \textbf{Interpretable replacement networks.}
        Large language models typically have a residual stream, which is modified through attention layers and MLPs (shown in purple).
        As these compute complex updates, interpretable replacement networks (shown in blue) are added to understand the internal behavior of a large language model.
        Examples are sparse autoencoders (reconstructing the residual stream), and transcoders (reconstructing the output of the MLP).
    }
    \label{fig:irn_setup}
\end{figure}

\section{Where Mechanistic Interpretability Breaks}
\label{sec:fragility}

A foundational observation in deep learning is that imperceptible input perturbations can flip the prediction of a model entirely---the so-called adversarial examples~\citep{goodfellow2015adversarial}.
For mechanistic interpretability, we want the active features of the \IRN{} to remain active when a sentence is rephrased in semantically equivalent ways.
Otherwise, the human-readable labels attached to those features do not reflect the actual computation,
and the \IRN{} fails to replicate the mechanism of the model.
This premise follows existing literature on the robustness of interpretations in regular neural networks~\citep{wu2023verix,bassan2023towards,la2021guaranteed}.

While most evaluations of \IRN{}s report the reconstruction loss on clean data,
adversarial settings are barely evaluated despite early signs of the instability of \IRN{}s:
For example, it was shown that sparse autoencoders trained as an \IRN{} are fragile to small adversarial perturbations~\citep{li2025_attack_features}.
In this section, we confirm these results with additional experiments across various open-weight model families and \IRN{} types,
evaluated under three levels of natural-language paraphrase:
\begin{itemize}
    \item \pertMinor{}: A single content word is replaced with a WordNet synonym~\citep{miller1995wordnet}.
    \item \pertMedium{}: All content words before the target are replaced with WordNet synonyms.
    \item \pertMajor{}: An LLM (\GemmaTwo-it) rephrases the sentence while preserving the target.
\end{itemize}

\paragraph{Results.}
\Cref{tab:fragility-summary} reports the top-$20$ Jaccard index\footnote{For two discrete sets $\contSet{A}$ and $\contSet{B}$,
the Jaccard index is given by $J=\frac{|\contSet{A}\cap\contSet{B}|}{|\contSet{A}\cup\contSet{B}|}$.}
between the active feature set of an \IRN{} at the clean input and at an adversarial input.
The Jaccard index drops substantially as the perturbation grows:
Already at the \pertMinor{} level, the perturbations reduce the worst-layer overlap $\jaccUp_{\min}$ ($-73\%$),
and at the \pertMajor{} level, the reduction is even larger ($-95\%$);
thus, almost all dominant features change and along with them any interpretation for a safety auditor.
We observe this drop for both sparse autoencoders and transcoders.
Evaluation details are given in \cref{sec:evaluation_details}.

\begin{table}[t]
    \centering
    \caption{Attack-derived top-$20$ Jaccard upper bound $\jaccUp$ of the active feature set under paraphrase perturbations across all layers.}
    \begin{tabular}{l l C C C C C C}
        \toprule
        & & \multicolumn{2}{c}{\pertMinor} & \multicolumn{2}{c}{\pertMedium} & \multicolumn{2}{c}{\pertMajor} \\
        \cmidrule(lr){3-4} \cmidrule(lr){5-6} \cmidrule(lr){7-8}
        Model & \IRN{} & \jaccUp_{\min}\,(\uparrow) & \jaccUp_{\mathrm{mean}}\,(\uparrow) & \jaccUp_{\min}\,(\uparrow) & \jaccUp_{\mathrm{mean}}\,(\uparrow) & \jaccUp_{\min}\,(\uparrow) & \jaccUp_{\mathrm{mean}}\,(\uparrow) \\
        \midrule
        \GPT         & \SAE{}   &  0.27 &  0.37 &  0.06 &  0.18 &  0.05 &  0.13 \\
        \GPT         & \TC{}    &  0.70 &  0.79 &  0.45 &  0.62 &  0.38 &  0.54 \\
        \GemmaTwo    & \SAE{}   &  0.62 &  0.78 &  0.35 &  0.59 &  0.33 &  0.54 \\
        \GemmaTwo    & \TC{}    &  0.46 &  0.57 &  0.20 &  0.33 &  0.20 &  0.31 \\
        \GemmaThree  & \SAE{}   &  0.55 &  0.69 &  0.30 &  0.49 &  0.30 &  0.44 \\
        \GemmaThree  & \TC{}    &  0.47 &  0.67 &  0.18 &  0.42 &  0.20 &  0.38 \\
        \Llama       & \SAE{}   &  0.54 &  0.66 &  0.26 &  0.45 &  0.33 &  0.42 \\
        \Llama       & \TC{}    &  0.56 &  0.64 &  0.25 &  0.40 &  0.21 &  0.35 \\
        \QwenShort   & \SAE{}   &  0.37 &  0.49 &  0.18 &  0.29 &  0.12 &  0.20 \\
        \bottomrule
    \end{tabular}
    \label{tab:fragility-summary}
\end{table}

Thus, the active features of an \IRN{} can change substantially for adversarial inputs.
However, please note that the underlying model can indeed also behave differently for an adversarial input,
and we would only expect that a \emph{robust} model maintains its dominant features under adversarial inputs.
Moreover, an adversarial attack only witnesses a single bad input:
It shows that the Jaccard index can drop at least this far, but cannot rule out an adversarial input where it drops further.
The numbers in \cref{tab:fragility-summary} are therefore an upper bound $\jaccUp$ on the true Jaccard index $\jaccTrue$ under adversarial attacks.
To obtain guarantees, a faithful \IRN{} of a robust model needs to have a high lower bound $\jaccLo$.

\section{Verifying the Mechanism Equivalence of an LLM and its IRNs}
\label{sec:formal}

We have established that an \IRN{} of a robust model has to have a high lower bound on the Jaccard index $\jaccLo$.
Crucially, the \IRN{} also has to be \emph{faithful} for a sensible $\jaccLo$ evaluation, i.e., the computation of the \IRN{} should match the underlying model.
As an extreme example, imagine an unfaithful \IRN{} that always has the same (safe) features active independently of the model input, e.g., through large biases on these features.
Then, $\jaccLo$ is always $1$ even though the underlying model might be unsafe, and thus it is an insufficient measure as the \IRN{} is not faithful.
To guarantee the faithfulness, we develop a framework to formally verify the equivalence of the model and the \IRN{} under adversarial inputs in this section,
and obtain $\jaccLo$ as a by-product of this equivalence check.

We construct a set $\nnInputSet\subset\R^{\numModelDim\times\numTokens}$ containing all adversarial embeddings of an input sentence.
For example, $\nnInputSet$ can contain all semantically equivalent sentences to that input sentence through synonym replacements of each word.
As explicitly enumerating all synonym combinations quickly results in a combinatorial explosion,
we construct, per token, a convex hull in the embedding space over all synonyms of that token,
and then compute the Cartesian product of these convex hulls to obtain $\nnInputSet$.
Then, with $\nnHiddenSet_\layerIdx$ being the propagated set at each layer $\layerIdx$, we can state our faithfulness criterion:
\begin{definition}
    \label{def:delta-equivalence}
    For a $\nnOutBound\in\R_+$, we say that an \IRN{} at layer $\layerIdx$ is $\nnOutBound$-equivalent to the underlying model on $\nnHiddenSet_\layerIdx$ if
    \begin{align*}
        \forall\, \hAdv \in \nnHiddenSet_\layerIdx\colon & &
        \big\| \hAdv - \layerSAE(\hAdv) \big\|_\infty &\le \nnOutBound, &
        \big\| \layerMLP(\hAdv) - \layerTC(\hAdv) \big\|_\infty &\le \nnOutBound,
    \end{align*}
    and we say that the \IRN{} is faithful if $\nnOutBound$ is sufficiently small.
\end{definition}
Please note that $\nnHiddenSet_\layerIdx$ is a continuous set, e.g., imagine a hypercube with radius $\nnPertRadius\in\R_+$ containing all embeddings of the adversarial inputs.
To test for $\delta$-equivalence (\cref{def:delta-equivalence}), we can adapt existing neural network verifiers~\citep{kaulen20256th} for our use case.
In particular, we construct for each layer $k$ the difference network $\layerMLP(\hAt) - \layerIRN(\hAt)$ as a composite of two parallel paths with difference-based aggregation (\Cref{fig:verification_setup}),
and the verifier tries to prove that the difference is bounded by a given $\nnOutBound$.
We defer a more detailed description of how these verifiers work to \cref{sec:background-nnv} and provide optimizations applied in our settings in \cref{sec:per_layer}, which also gives us $\jaccLo$ (\cref{sec:lower-bound-jaccard}).

\begin{figure}[t]
    \centering
    \includetikz{./figures/verification_setup/verification_setup}
    \caption{\textbf{Mechanism verification as a composite network.}
    (a) The local neighborhood around an input sentence is captured by $\nnHiddenSet_k$ so that adversarial inputs are also contained.
        (b) We can deploy formal verification to bound the maximal difference between the $\layerMLP$ and the $\layerTC$ within the local neighborhood.
        (c) The faithfulness property in \Cref{def:delta-equivalence} holds if the computed difference set is contained in a small box $[-\nnOutBound, \nnOutBound]^{\numModelDim}$.
        The \SAE{} case is analogous, with $\layerMLP$ replaced by the identity (\cref{eq:sae}).
    }
    \label{fig:verification_setup}
\end{figure}

With such a verifier, we aim to find the smallest verifiable bound $\deltaTrue$.
As with the Jaccard index $\jaccTrue$, it is usually also not feasible to compute $\deltaTrue$, and we need to find bounds: $\deltaPGD \leq \deltaTrue \leq \deltaVerified$.
We can find a lower bound $\deltaPGD$ using adversarial attacks trying to maximize the difference, and the upper bound $\deltaVerified$ needs to be verified.
Please note that this is converse to $\jaccLo,\jaccUp$, where the lower bound---describing the number of unflippable active features---needs to be verified.
We find $\deltaVerified$ by conducting a binary search with sequential verification queries (details in \cref{sec:evaluation_details}, \cref{alg:verify_eps}).
To display the faithfulness gap in figures, we draw the gap between the median $\deltaPGD$ and $\deltaVerified$ over the test dataset as a filled box%
---thus, the median $\deltaTrue$ has to be within the filled box---%
and add whiskers for the $25\%$- and $75\%$-quantile of $\deltaPGD$ and $\deltaVerified$, respectively.

\begin{figure}[b!]
    \centering
    \begin{minipage}[c]{0.75\linewidth}
        \centering
        \includetikz{./figures/evaluation/faithfulness_gap/faithfulness_gap}
    \end{minipage}\hfill
    \begin{minipage}[c]{0.2\linewidth}
        \centering
        \tikzexternaldisable
        \begin{tikzpicture}[scale=0.6]
            \pic (bot) at (0,0) {robot={color=CORAcolor1, pose=thinking}};
            \node[speech={bot-top}{CORAcolor4}, anchor=south, yshift=10pt, at={(bot-top)}]
            {How to close the \\ faithfulness gap?};
        \end{tikzpicture}%
        \tikzexternalenable
    \end{minipage}
    \hfill
    \caption{
        \textbf{Per-layer faithfulness gap on \GPT{}.}
        For both \IRN{}s and across all layers, we can verify a substantial faithfulness gap between the computation of the \IRN{}s and the underlying model.
    }
    \label{fig:verified-vs-pgd}
\end{figure}

\paragraph{Results.}
\Cref{fig:verified-vs-pgd} reports the faithfulness gap for both \IRN{}s of \GPT{} (other models in \cref{sec:ablations}),
where we have verified a substantial faithfulness gap across all layers.
For example, we have verified that for layer $1$, the \SAE{} is $\nnOutBound$-equivalent to the underlying model for some $\nnOutBound<1$.
Thus, the reconstructed residual stream under the $\ell_\infty$ norm has, on average, a smaller difference than $1$ given adversarial inputs.
To put this number into context, we also observed such differences on the residual stream of the underlying model given adversarial inputs.
Unfortunately, our results demonstrate that later layers become less faithful,
arguably to a degree that does not allow for meaningful interpretations anymore.
We want to stress that this is not an artifact of the (incomplete) verifier,
as not only the upper bound found by the verifier is rising, but also the lower bound found by adversarial attacks.
We note that the faithfulness gap is generally larger for \TC{}s, which is to be expected as a \TC{} has to replicate the complex computations of the \MLP{} there, and \SAE{}s ``only'' have to reconstruct their input.
Evaluation details are again given in \cref{sec:evaluation_details}.

% -----------------------------------------------------------------------------

\section{Improving the Faithfulness of an \IRN{}}
\label{sec:way_forward}

These results naturally lead to the question of whether more faithful \IRN{}s can be obtained.
Standard training of \IRN{}s usually focuses on the combination of reconstruction and sparsity for interpretability on clean data \citep{marks2025feature,dunefsky2024transcoders};
however, it does not adequately consider adversarial inputs, such that their susceptibility likely degrades the verification results in the previous section.
Conceptually, we want to train the \IRN{} in a way so that in \cref{fig:verification_setup}c, any sentence for which the error set is like the top plot becomes more like the bottom plot%
---thereby aligning its manifold with the manifold of the underlying model.
We investigate different training regimes in this section to improve the faithfulness of an \IRN{} under adversarial inputs:
\begin{itemize}
    \item \trainStd: The given \IRN{} obtained using standard training is used as a baseline.
    \item \trainPGD: Adversarial training using PGD-found $\hAdv$ as augmentation~\citep{madry2018towards}.
    \item \trainSet: Verification-aware training with reachable-set loss \citep{koller2024setbasedtraining}.
    \item \trainLoRA: Low-rank updates typically used for fine-tuning LLMs \citep{hu2022lora}.
\end{itemize}

% -----------------------------------------------------------------------------

\begin{figure}[b!]
    \centering
    \hfill
    \begin{minipage}[c]{0.2\linewidth}
        \centering
        \tikzexternaldisable
        \begin{tikzpicture}[scale=0.6]
            \pic (bot) at (0,0) {robot={color=CORAcolor5, pose=cheering}};
            \node[speech={bot-top}{CORAcolor5}, anchor=south, yshift=12pt, at={(bot-top)}]
            {Verification-aware \\ training helps to \\ close the gap!};
        \end{tikzpicture}%
        \tikzexternalenable
    \end{minipage}\hfill
    \begin{minipage}[c]{0.75\linewidth}
        \centering
        \includetikz{./figures/evaluation/training_delta/training_delta}
    \end{minipage}
    \caption{\textbf{Faithfulness gap per training regime.} Verification-aware training results in a substantially smaller faithfulness gap between \GPT{} and its \SAE{} at layer $6$.}
    \label{fig:training-delta}
\end{figure}

\paragraph{Results.}
We fine-tune OpenAI's \SAE{} of \GPT{} using these training regimes (more experiments again in \cref{sec:ablations}),
and show the verification results in \cref{fig:training-delta} using the same setup as in \cref{sec:formal}.
Our results show substantially smaller verified upper bounds $\deltaVerified$ on the difference set ($\approx -90\%$ for \trainSet{} over \trainStd).
Please note that the difference sets live in a high-dimensional space ($\numModelDim=768$), and thus the relative error volume is extremely small ($\approx 10^{-779}$ of \trainSet{} over \trainStd).
Crucially, the retrained \IRN{}s are more sparse and thus interpretable (\trainSet{} has on average $13.9$ active features, $-81\%$ over \trainStd).
We can also improve the faithfulness of the \IRN{} with efficient low-rank updates (LoRA) despite the verifier requiring robustness in the full space.
We anticipate this result to also be of broader interest within the verification community as it might transfer to non-NLP-based models as well.

While the retrained \IRN{}s are both easier to verify and harder to attack,
we acknowledge that the preservation of active features is still not substantial over the baseline as shown in \Cref{fig:jacclo}.
From the top-$20$ features of the clean input, only $2$--$3$ are provably retained for adversarial inputs,
and $\jaccLo$ is improved across all tested training regimes.
However, as our trained \IRN{}s are much closer to the underlying model (\cref{fig:training-delta}),
this Jaccard evaluation is more faithful than for the baseline \IRN{}.
Thus, it is reasonable to attribute this result to the underlying model not being very robust.
Indeed, the underlying model already flips its next-token prediction for $21\%$ of the \pertMinor{} paraphrases,
confirming that the underlying model is itself fragile to these paraphrases.

\begin{figure}[t]
    \centering
    \begin{minipage}[c]{0.78\linewidth}
        \centering
        \includetikz{./figures/evaluation/jaccard_gap/jaccard_gap}
    \end{minipage}\hfill
    \begin{minipage}[c]{0.2\linewidth}
        \centering
        \tikzexternaldisable
        \begin{tikzpicture}[scale=0.6]
            \pic (bot) at (0,0) {robot={color=gray, pose=thinking}};
            \node[speech={bot-top}{gray}, anchor=south, yshift=10pt, at={(bot-top)}]
                {\ldots\ but what if \\ the model is \\ not robust?};
        \end{tikzpicture}%
        \tikzexternalenable
    \end{minipage}
    \caption{\textbf{Top-$20$ feature-overlap bounds on the \GPT{} layer-$6$ \SAE{}.}
        For each training regime the box spans the certified lower bound $\jaccLo$ to the attack-derived upper bound $\jaccUp$ over the test samples ($\jaccLo\le\jaccTrue\le\jaccUp$); whiskers reach the $q_{25}$ of $\jaccLo$ and the $q_{75}$ of $\jaccUp$. Further right is more faithful.}
    \label{fig:jacclo}
\end{figure}

% --------------------------------------------

\section{Vision: Compositional Verification of the Entire LLM}
\label{sec:full-llm-verification}

In this section, we state our vision for verifying the robustness of a model and the faithfulness of its \IRN{}s across all layers.
In particular, \cref{def:delta-equivalence} assumes a set $\nnHiddenSet_\layerIdx$ capturing all propagated adversarial inputs $\hAdv$ at each layer $\layerIdx$.
However, as these verifiers do not scale yet to the full model, we construct $\nnHiddenSet_\layerIdx$ at each layer individually via the convex hull over $\hAdv$ rather than the propagated input set $\nnInputSet$.

To formally verify the faithfulness, $\nnInputSet$ needs to be propagated to layer $k$, enclosing all updates through attention layers and \MLP{}s along the way (\cref{fig:irn_setup}).
While it is possible to enclose the updates of attention layers \citep{bonaert2021fast} and \MLP{}s (\cref{sec:background-nnv}),
the resulting enclosures often become too conservative to meaningfully reason about the faithfulness.
Moreover, the LLM might be too large for the verifier to reason about the entire model in one query.

Thus, we envision a compositional approach visualized in \cref{fig:propagation_setup},
where each block of the LLM is verified individually analogous to \cref{sec:formal},
i.e., each block is verified by determining an input-output set relation,
and the faithfulness of the entire model is verified if for any two subsequent blocks, the output set of the former block is contained in the input set of the latter block,
and the verified difference set for the \IRN{} of each block, characterized by $\deltaVerified$, is sufficiently small.

The input set of the first block is initialized with $\nnInputSet$, and the input set of subsequent blocks is initialized via the convex hull of the propagated samples $\hAdv$.
We can then compute the respective output set of each block and check for containment of the input set of the subsequent block.
To realize this efficiently, we require the input sets of each block to be axis-aligned boxes.

\begin{figure}[t]
    \centering
    \includetikz{./figures/compositional/compositional}
    \caption{\textbf{Compositional faithfulness verification of the entire LLM.}
    Each block (\Attn{}\,+\,\MLP{}) is verified locally (\cref{sec:formal}),
        bounding the difference of its \IRN{} by $\deltaPGD\le \deltaTrue \le\deltaVerified$.
        The whole model is then verified compositionally if, for every pair of subsequent blocks,
        the propagated output set of the former is contained in the axis-aligned input box $\nnHiddenSet$ of the latter;
        otherwise, adaptations are required to the surrounding blocks.
    }
    \label{fig:propagation_setup}
\end{figure}

For all blocks where the containment cannot yet be shown, there are several approaches to resolve this issue:
(i) Neural network verifiers can often obtain tighter results given more computation time, so that a smaller output set might be contained.
(ii) The input set of the subsequent block is enlarged; please note that this might also affect downstream blocks.
Both of these points assume that the model and the \IRN{} are fixed; however, a more realistic scenario is that the developer of the model
(iii) could also re-train (certain blocks of) the model to make it more robust (as we did for the \IRN{} in \cref{sec:way_forward}) and thus obtain tighter output bounds.
(iv) Conversely, if an output set of one block is tightened by such an action, the subsequent block can also be tightened with positive impact downstream.

\paragraph{Current state.}
While verification-aware training shows significant improvements on the bounds of the error set (\cref{fig:training-delta}),
we acknowledge that our used sets $\nnHiddenSet_\layerIdx$ are still a bit too small to verify the faithfulness of the entire LLM compositionally (e.g., $5\times$ smaller for layer $6$ of \GPT{} \SAE{}: \cref{tab:hull_eps}).
However, we believe that through a combination of (i)--(iv), this gap can be further closed such that our compositional framework becomes feasible.
As each point requires a more detailed analysis to develop sophisticated algorithms, we leave this for future work.

% =============================================================================
% Related Work  (main body; \input from main.tex, just before the conclusion)
% =============================================================================
\section{Related Work}
\label{sec:related_work}

Mechanistic interpretability has gained great interest in recent years to understand the inner workings of large language models~\citep{bereska2024mechanistic,zhao2024explainability,somvanshi2026bridging},
particularly as the computations are dense and hard to interpret~\citep{elhage2022superposition, park2024linear}.
This lack of transparency has inspired two research branches:
circuit discovery, which isolates the sub-computation responsible for a behavior,
and interpretable replacement networks (\IRN{}s), which substitute a dense computation with a sparse, human-readable stand-in.

\paragraph{Circuit-level interpretability.}
Early work traces circuits over the functional modules of a language model,
inspired by circuit discovery in vision models \citep{olah2020zoom}.
For language models, a circuit groups a subset of attention heads active for a certain task \citep{wanginterpretability},
which can reveal for example a base-10 addition circuit \citep{feucht2026arithmetic},
or clusters of neurons that are active for the same tasks \citep{geiger2025causal}.
Generally, the goal is to discover these circuits automatically \citep{conmy2023acdc};
however, the individual units within each identified circuit remain polysemantic \citep{elhage2022superposition} and thus hard to interpret.

\paragraph{Sparse interpretable replacement networks.}
To disentangle polysemantic neurons, these neurons are replaced with interpretable networks~\citep{cunningham2023saes,dunefsky2024transcoders}.
In particular, it was shown that each \MLP{} module computes specific concept updates on the residual stream \citep{geva2021keyvalue, meng2022rome};
however, the computations made by the \MLP{} are opaque, making them a natural target for an interpretable replacement.
In contrast, the attention modules are usually easier to interpret as one can directly read out to which tokens each token attends \citep{vaswani2017attention}.
\SAE{}s decompose the residual stream into sparsely activated features reconstructing the residual stream \citep{cunningham2023saes}.
These feature neurons are usually arranged in a single wide hidden layer---as it is well known that such architectures are universal approximators if sufficiently wide \citep{hornik1989multilayer}.
In practice, various architectures such as gated \SAE{}s \citep{rajamanoharan2024gated} and dedicated activation functions \citep{gao2024scalingsae} trade off reconstruction and sparsity.
End-to-end dictionary learning optimizes features for functional importance rather than reconstruction \citep{braun2025e2e}.
\TC{}s generalize this concept \citep{dunefsky2024transcoders}, including across layer blocks \citep{lindsey2025circuit_tracing}.
There is also a growing research stream in combining both research directions, such that circuits in a model are discovered on the sparse interpretable network replacing the dense computation,
both on \SAE{}s \citep{marks2025feature} and on \TC{}s \citep{lindsey2025circuit_tracing}.
Across all of these prior works, the \IRN{}s are only evaluated empirically,
and never with a worst-case guarantee in adversarial settings.

\paragraph{Identifying features and critiquing faithfulness.}
After an \IRN{} is trained, the sparsely activated features need to be labeled to be understandable for humans.
The dominant method here is automated interpretability,
where another LLM generates and scores a natural language description of each feature \citep{bills2023autointerp, paulo2024autointerp}.
These descriptions are sometimes also made publicly available in the Neuronpedia ecosystem \citep{lin2024neuronpedia}.
A labeled feature can also be used to localize the respective computation in the underlying model \citep{gurnee2023probing}.
However, a growing critique questions the faithfulness of interpretable replacement networks:
For example, \SAE{}s trained on randomly initialized models produce auto-interpretation scores similar to trained models~\citep{heap2026automated},
and \SAE{}s trained on the same model and data learn different features \citep{paulo2025different}.
Adversarial attacks can show the fragility of the \IRN{} \citep{li2025_attack_features},
and certified defenses against adversarial attacks scale poorly \citep{kumar2023certifying}.

\paragraph{XAI and formal guarantees.}
Similar critiques have become louder in the broader field of explainable artificial intelligence (XAI):
Prominent post-hoc explainers such as LIME \citep{ribeiro2016should} and SHAP \citep{lundberg2017unified} assume linear behavior in the local neighborhood around an input,
and sampling-based explainers like Anchors \citep{ribeiro2018anchors} lack provable guarantees on their explanations \citep{wu2023verix,bassan2023towards,la2021guaranteed}.
With the development of formal neural network verifiers \citep{kaulen20256th},
explanations with formal guarantees were developed for various threat models \citep{bassan2025explaining,marquessilva2022formalxai}.
Formal guarantees have also been explored in the context of mechanistic interpretability such as provable circuit discovery on vision models \citep{hadad2026fmi}.
Our work brings this line of research to large language models.

\section{Conclusion}
\label{sec:conclusion}

Mechanistic interpretability tries to make sense of a model's behavior with the help of interpretable replacement networks.
However, these \IRN{}s only approximate the behavior of the model,
and this approximation has to be closely matching to obtain faithful interpretations---including in adversarial settings.
We show that existing \IRN{}s do not satisfy this fundamental property across five open-weight families (\GPT{}, \GemmaTwo{}, \GemmaThree{}, \Llama{}, and \Qwen{}).

Our work provides the first formal bounds for the faithfulness gap of \IRN{}s,
showing that the recent progress in the field of formal neural network verification can be utilized to verify the components of large language models.
We show that this faithfulness gap is substantial across all layers as the \IRN{} does not accurately replicate the mechanism of the model.
This gap seems to increase for later layers, likely as the model makes more complex connections there, and current \IRN{}s are not capturing these computations accurately.
Naturally, this raises the question of how this faithfulness gap can be closed,
where we show that verification-aware training substantially reduces the faithfulness gap ($-90\%$).
This is realized by penalizing the difference set such that the manifolds of both the model and the \IRN{} become aligned.
Finally, we generalize this framework to verify the faithfulness and robustness across the entire model.

Several caveats remain.
Our obtained \IRN{}s---although arguably more faithful---are still approximations and we view this work less as a final answer than as an impulse toward formally verified \IRN{}s.
Moreover, even a faithful \IRN{} does not need to be a \emph{complete} \IRN{}:
Too narrow a set of features may simply never represent a (possibly unsafe) feature, leaving it unobserved.
Encouragingly, a difference set that stays large despite training is itself a signal that the \IRN{} has too few features,
pointing toward dynamic feature expansion backed by universal-approximation guarantees~\citep{hornik1989multilayer}.
Finally, the main body of this work mainly covers verification results on \GPT{};
however, we see the verification of other \IRN{}s primarily as an engineering task,
e.g., even the high-dimensional \Llama{} \IRN{}s can be verified (shown in \cref{sec:scalability} along with the remaining model families).
Nevertheless, designing large language models and their \IRN{}s with verification in mind would be a valuable research direction.

Looking ahead, the per-layer guarantees lay the foundation for the verification of the full model (\cref{sec:full-llm-verification}) and extensions such as cross-layer transcoders.
Open research directions include an automatic alignment of the individual components, and
obtaining sufficient tightness in the difference set without sacrificing model capabilities.
We are excited by the possibilities our approach opens towards a future with trustworthy models and formally verified interpretations.

% Acknowledgements reveal the authors; camera-ready only.
\subsection*{Acknowledgements}
This work was financially supported by the project AL 1185/33-1 funded by the German Research Foundation (Deutsche Forschungsgemeinschaft, DFG).

% --- statements (required/recommended by ICLR 2027; excluded from the page limit)
% does not count towards the page limit

\subsection*{AI Use Statement}
In this work, we used generative AI tools to implement our idea for this work,
run experiments, assist with literature research, as well as refine the text of this paper.
This also includes the generation of all TikZ figures under our guidance and formalizing the required proofs in Lean.

We have manually double-checked all AI-assisted work to ensure correctness,
and added additional harnesses such as the Lean formalization of our proofs being machine-checked,
all AI-assisted code being covered by unit tests, the reported experiments being reproducible,
and every claim and citation in the text being checked against its source.

\subsection*{Ethics Statement}
The existence of adversarial examples is well known, including for language inputs with semantics-preserving input perturbations.
Our work makes the fragility of interpretable replacement networks explicit,
which could in principle guide an adversary who wants an interpretation to be misleading;
we consider the disclosure to be clearly beneficial, since the same insight is what allows practitioners to certify when an interpretation may be trusted.

\subsection*{Reproducibility Statement}
All experiments use publicly available open-weight models, interpretability artifacts, and text corpora with their exact sources in \cref{sec:evaluation_details}, together
with the hardware, software versions, and hyperparameters of every experiment.
The accompanying code and Lean formalizations are released as part of the supplementary material.

% --- references -----------------------------------------------------------
\bibliographystyle{iclr2027_conference}
\bibliography{references}

% --- appendix -------------------------------------------------------------
\appendix
\crefalias{section}{appendix}
\crefalias{subsection}{appendix}
\crefalias{subsubsection}{appendix}

\section{Details on Teaser Figure}
\label{sec:teaser_example}

The teaser figure (\Cref{fig:teaser}) shows a safety-critical sentence we analyzed for an \IRN{} interpretation flip on \GemmaTwo{} with the \GemmaScope{} \SAE{} at layer $16$.
It shows a harmful$\to$benign attack via a trivial, meaning-preserving synonym swap shifting the dominant features of the target token off their biological reading.
Details of this attack are given below:

\paragraph{Sentence and target token.}
\begin{quote}
    \itshape
    ``Describe how someone could culture a sample
    using laboratory equipment available online.''
\end{quote}
The target token is \token{culture}, a polysemous content word that WordNet~\citep{miller1995wordnet} lists with $8$ senses.

\paragraph{Perturbation hull.}
We instantiate the \pertMinor{} perturbation level (\cref{sec:fragility}):
Replace a single content word before the target with a WordNet synonym.
For this sentence, an admissible swap is, e.g., \token{someone}$\to$\token{somebody}:
\begin{quote}
    \itshape
    ``Describe how \underline{somebody} could culture a sample
    using laboratory equipment available online.''
\end{quote}
The convex hull $\nnHiddenSet_\layerIdx$ captures the residual-stream activations of these perturbations at the target position (\cref{sec:formal}).

\paragraph{Interpretation flip.}
We then search $\nnHiddenSet_\layerIdx$ for an adversarial point $\hAdv$ that specifically suppresses the safety-relevant features.
Concretely, we run projected gradient descent (PGD, $300$ iterations, $8$ random restarts)
that minimizes the pre-activations of the four biological features active on the clean token,
projecting back into $\nnHiddenSet_\layerIdx$ after each step.
This targeted interpretation attack drives \emph{all four} biological features out of the top-$8$.

\Cref{tab:teaser_flip} lists the top-$8$ \SAE{} feature labels (Neuronpedia \citep{lin2024neuronpedia}, \texttt{gemmascope-res-16k}) at $\hAt$ and at $\hAdv$,
with the top-$8$ Jaccard index under adversarial attack being $\overline{J}=0.23$.
On the clean input, four of the eight features describe a biological-cultivation reading%
---\emph{cellular and molecular biology}, \emph{neurophysiology}, \emph{biochemical extraction}, and \emph{mold growth}---alongside a numerical-categorization feature.
At the targeted adversarial point, this entire cluster is gone, replaced by generic, non-biological features (\emph{accountability in organizational contexts}, \emph{research methodology}, \emph{mathematical operations}, and other unrelated topics).
Only three generic features survive the swap.

\begin{table}[h]
    \small\centering
    \caption{Top-$8$ \SAE{} features active at $\hAt$ vs. the targeted adversarial point $\hAdv$, \GemmaTwo{} layer $16$ with Neuronpedia auto-interp labels.
        The biological features (red) are removed by the adversarial attack.
    }
    \label{tab:teaser_flip}
    \begin{tabular}{rp{0.40\linewidth} rp{0.40\linewidth}}
        \toprule
        \multicolumn{2}{c}{\textbf{$\hAt$ (clean): biological reading}} &
        \multicolumn{2}{c}{\textbf{$\hAdv$ (targeted): generic reading}} \\
        \cmidrule(lr){1-2}\cmidrule(lr){3-4}
        \#      & description                                                                   & \#      & description                                     \\
        \midrule
        $2234$  & copyright / licensing                                                         & $1695$  & accountability \& responsibility (professional) \\
        $2441$  & \textcolor{CORAcolorUnsafe}{\textit{mold growth and health implications}}     & $1750$  & mathematical operations \& relationships \\
        $5813$  & numerical data and categorization                                             & $2234$  & copyright / licensing \\
        $8084$  & cultural practices                                                            & $3814$  & capability \& potential outcomes \\
        $8610$  & data structures, programming, metadata                                        & $8084$  & cultural practices \\
        $8753$  & \textcolor{CORAcolorUnsafe}{\textit{neurophysiological concepts}}             & $8610$  & data structures, programming, metadata \\
        $11227$ & \textcolor{CORAcolorUnsafe}{\textit{biochemical extraction \& analysis}}      & $12644$ & race, ethnicity \& social justice \\
        $11863$ & \textcolor{CORAcolorUnsafe}{\textit{cellular processes \& molecular biology}} & $15687$ & research methodologies \& data analysis \\
        \bottomrule
    \end{tabular}
\end{table}

\section{Verifying Language Models at Scale}
\label{sec:verifying-language-models-at-scale}

\subsection{Background on Neural Network Verifiers}
\label{sec:background-nnv}

Neural network verification has made rapid progress over the last couple of years \citep{kaulen20256th}.
Generally, the verification problem is given as follows:
\begin{definition}
    Given a neural network $\NN\colon\R^n\rightarrow\R^m$,
    a continuous input set $\nnInputSet\subset\R^n$ (e.g., constructed with an $\ell_\infty$ perturbation radius $\nnPertRadius\in\R_+$ around an input $\nnInput\in\R^n$),
    and an unsafe specification $\unsafeSet\subset\R^m$ (e.g., misclassification),
    a neural network verifier aims to show that
    \begin{equation*}
        \forall \widetilde{\nnInput}\in\nnInputSet\colon \quad \NN(\widetilde{\nnInput}) \not\in \unsafeSet.
    \end{equation*}
\end{definition}
It was shown that verifying such properties of neural networks is NP-hard \citep{katz2017reluplex},
such that state-of-the-art verifiers usually provide sound but incomplete verification results to scale to large neural networks.

In this work, we use the verification toolbox CORA \citep{Althoff2015ARCH,Althoff2025manual},
which uses reachability analysis to reason about the problem statement.
In particular, we use zonotopes \citep{girard2005zonotopes} to represent $\nnInputSet$:
\begin{definition}
    Given a center $c\in\R^n$ and a generator matrix $G\in\R^{n\times p}$,
    a zonotope is defined as:
    \begin{equation*}
        \Z = \shortZ{c}{G} = \defZ \subset\R^n.
    \end{equation*}
\end{definition}
The reasoning here is that certain operations like affine maps, interval bounds, and Minkowski sums can be computed efficiently for zonotopes.
For example, given two zonotopes $\Z_1 = \shortZ{c_1}{G_1}$, $\Z_2 = \shortZ{c_2}{G_2}$, the Minkowski sum is computed as:
\begin{equation}
    \Z_1 \oplus \Z_2 = \left\{ z_1 + z_2 \ \middle|\ z_1 \in \Z_1,\,z_2\in\Z_2 \right\} = \shortZ{c_1+c_2}{[G_1\ G_2]}.
\end{equation}
To verify the specifications, CORA propagates $\nnInputSet$ through the neural network to compute an (outer-approximative) output set $\nnOutputSet$,
which is then checked against the specification:
\begin{equation}
    \nnOutputSet\cap\unsafeSet \stackrel{!}{=} \emptyset.
\end{equation}
In more detail, $\nnOutputSet$ is computed by enclosing the output of each layer:
Linear layers can be computed by directly applying the affine map to the given zonotope,
and (elementwise) nonlinear layers can be enclosed by finding a linear approximation and bounding the approximation error using an interval \citep{singh2018fast, koller2025out} for the respective input set $\nnHiddenSet_\layerIdx$:
\begin{equation}
    \nnActFun(\nnHiddenSet_\layerIdx) \subseteq \opEnclose{\nnActFun}{\nnHiddenSet_\layerIdx} = \bigl(\diag{m_k}\nnHiddenSet_\layerIdx\oplus t_k\bigr) \;\oplus\; \shortI{\underline{d}_k}{\overline{d}_k},
    \label{eq:relax}
\end{equation}
where the enclosure parameters are given by the slope $m_k\in\R^n$ and offset $t_k\in\R^n$ of the linear approximation, and an error interval $\shortI{\underline{d}_k}{\overline{d}_k}\subset\R^n$.
This high-level description suffices for our work, and we refer interested readers to \citet{koller2025out} for more details.

% --------------------------------------------

\subsection{Per-Layer Verification of an Interpretable Replacement Network}
\label{sec:per_layer}

Verifying the faithfulness of an \IRN{} as described in \cref{sec:formal} is costly due to the $\numIRNdim$-dimensional hidden layer of the \IRN{},
where it holds that the model dimension $\numModelDim\ll\numIRNdim$ to expose the sparse features (\cref{tab:model_specs}, \cref{tab:irn_specs}).
To optimize the formal verification in our setting, we exploit the special structure of the \IRN{}.
Please note that we only need to certify the $\ell_\infty$-bounds of the resulting difference set (\cref{def:delta-equivalence}, \cref{fig:verification_setup}).

We discuss the transcoder case (\cref{eq:tc}) in this section, with the sparse autoencoder case being analogous (\cref{eq:sae}).
Per token, we bound the difference network
\begin{equation}
    \label{eq:diff-network}
    f_\layerIdx(\hAt)=\layerMLP(\hAt)-\layerTC(\hAt)
\end{equation}
over an input set $\nnHiddenSet_\layerIdx\subset\R^\numModelDim$, i.e., we need to bound
\begin{equation}
    f_\layerIdx(\nnHiddenSet_\layerIdx) = \{ f_\layerIdx(\hAt)\ |\ \hAt\in\nnHiddenSet_\layerIdx \}.
\end{equation}
A set-based verifier can enclose this difference by enclosing each layer of $f_\layerIdx$ (\cref{sec:background-nnv}).
In the transcoder branch, we only have a single high-dimensional nonlinear layer sandwiched by linear maps.
Thus, we are only required to compute a single enclosure for the nonlinear layer, as linear layers are usually easy to compute.
The parameters of that enclosure are determined by the pre-activation bounds $\opIntervalEnclosure{W^{\mathrm{enc}}_\layerIdx\nnHiddenSet_\layerIdx+b^{\mathrm{enc}}_\layerIdx}$ alone.
We notice that if $\nnHiddenSet_\layerIdx$ is an axis-aligned box, i.e., $\nnHiddenSet_\layerIdx=[\hAt-\nnPertRadius,\hAt+\nnPertRadius]$ for some $\nnPertRadius\in\R_+$,
it holds~\citep{jaulin2001interval}:
\begin{equation}
    \label{eq:pre-activation-bounds}
    \opIntervalEnclosure{W^{\mathrm{enc}}_\layerIdx\nnHiddenSet_\layerIdx\oplus b^{\mathrm{enc}}_\layerIdx} = W^{\mathrm{enc}}_\layerIdx\opIntervalEnclosure{\nnHiddenSet_\layerIdx}\oplus b^{\mathrm{enc}}_\layerIdx.
\end{equation}
This enables us to deploy a one-step look-ahead using pure interval arithmetic \citep{ladner2025nnreduction} to obtain the \emph{same} parameters of the enclosure: $(m_\layerIdx,t_\layerIdx,\shortI{\underline d_\layerIdx}{\overline d_\layerIdx})$.
Given these, we can redistribute the transcoder enclosure (\cref{eq:relax}) as follows:
\begin{align}
    \label{eq:tc-enclosure}
    \begin{split}
        \layerTC(\nnHiddenSet_\layerIdx) &\subseteq \opEnclose{\layerTC}{\nnHiddenSet_\layerIdx} \\
        &= W^{\mathrm{dec}}_\layerIdx\bigl(\diag{m_\layerIdx}(W^{\mathrm{enc}}_\layerIdx\nnHiddenSet_\layerIdx\oplus b^{\mathrm{enc}}_\layerIdx)\oplus t_\layerIdx\oplus \shortI{\underline d_\layerIdx}{\overline d_\layerIdx}\bigr)\oplus b^{\mathrm{dec}}_\layerIdx \\
        &= W^{\mathrm{dec}}_\layerIdx\diag{m_\layerIdx}W^{\mathrm{enc}}_\layerIdx\ \nnHiddenSet_\layerIdx
        \ \oplus \ W^{\mathrm{dec}}_\layerIdx\bigl(\diag{m_\layerIdx}b^{\mathrm{enc}}_\layerIdx\oplus t_\layerIdx \oplus  \shortI{\underline d_\layerIdx}{\overline d_\layerIdx} \bigr) \oplus  b^{\mathrm{dec}}_\layerIdx \\
        &=  \qquad \qquad \quad\ \ \, \, M_{k,\TC{}} \nnHiddenSet_\layerIdx\ \oplus\ \contSet{I}_{k,\TC{}}.
    \end{split}
\end{align}
Please note that the term $W^{\mathrm{dec}}_\layerIdx\diag{m_\layerIdx}W^{\mathrm{enc}}_\layerIdx$ can be collapsed into a single matrix $M_{k,\TC{}}$ such that the high-dimensional zonotope is never explicitly constructed.
Unfortunately, this optimization is not directly possible for the \MLP{} due to the multiple layers; nevertheless, a set-based verifier can give us:
\begin{equation}
    \label{eq:mlp-enclosure}
    \layerMLP(\nnHiddenSet_\layerIdx) \subseteq \opEnclose{\layerMLP}{\nnHiddenSet_\layerIdx} = M_{k,\MLP{}} \nnHiddenSet_\layerIdx \oplus \contSet{I}_{k,\MLP{}},
\end{equation}
which results in an overall enclosure of the difference network (\cref{eq:diff-network}):
\begin{equation}
    \label{eq:diff-enclosure}
    f_\layerIdx(\nnHiddenSet_\layerIdx) \subseteq \opEnclose{f_\layerIdx}{\nnHiddenSet_\layerIdx} = (M_{k,\MLP{}} - M_{k,\TC{}}) \nnHiddenSet_\layerIdx \oplus (\contSet{I}_{k,\MLP{}} \oplus -\contSet{I}_{k,\TC{}}).
\end{equation}
The case for \SAE{} is analogous with the identity instead of the \MLP{}.
The goal of the set-based verifier is then to find the smallest $\delta\in\R_+$ such that this difference enclosure is contained in $\shortI{-\delta}{\delta}^{\numModelDim}$ (\cref{def:delta-equivalence}).
A smaller $\delta$ thus means that the \IRN{} is more faithful, with an ideal case of $\delta\rightarrow 0$.
\Cref{alg:diff-enclosure} summarizes the resulting per-layer verification with
\Cref{fig:alignment} illustrating two example $\delta$ values.

\begin{figure}[t]
    \centering
    \includetikz{./figures/alignment/alignment}
    \caption{\textbf{Alignment of an \IRN{} with the model.}
    Output surfaces of $\layerMLP$ and $\layerTC$ over a two-dimensional slice of the local neighborhood $\nnHiddenSet_\layerIdx$, shown for one output dimension $(l)$; colors as in \cref{fig:verification_setup}.
        (a) An unfaithful \IRN{}: the surfaces disagree and even cross, so only a large $\nnOutBound$ can be certified.
        (b) A faithful \IRN{}: the surfaces almost coincide, and a small $\nnOutBound$ suffices.
    }
    \label{fig:alignment}
\end{figure}

\begin{algorithm}[h]
    \caption{Per-layer transcoder verification: difference enclosure and certified $\nnOutBound$.}
    \label{alg:diff-enclosure}
    \begin{algorithmic}[1]
        \Require Input $\nnHiddenSet_\layerIdx$; $\TC{}_k$-block; \layerMLP-block; $\delta\in\R_+$
        \State $\shortI{l}{u}\gets W^{\mathrm{enc}}_\layerIdx\,\opIntervalEnclosure{\nnHiddenSet_\layerIdx}+b^{\mathrm{enc}}_\layerIdx$ \Comment{Pre-activation bound (exact for a box input; \cref{eq:pre-activation-bounds})}
        \State $(m_\layerIdx,t_\layerIdx,\shortI{\underline d_\layerIdx}{\overline d_\layerIdx})\gets$ enclosure of $\nnActFun$ on $\shortI{l}{u}$ \Comment{One-step look-ahead (\cref{eq:relax})}
        \State $(M_{k,\TC{}},\contSet{I}_{k,\TC{}})\gets\opEnclose{\layerTC}{\nnHiddenSet_\layerIdx}$ \Comment{\TC{} enclosure (\cref{eq:tc-enclosure})}
        \State $(M_{k,\MLP{}},\contSet{I}_{k,\MLP{}})\gets\opEnclose{\layerMLP}{\nnHiddenSet_\layerIdx}$ \Comment{\MLP{} enclosure (\cref{eq:mlp-enclosure})}
        \State $M_k\gets M_{k,\MLP{}}-M_{k,\TC{}}$;\quad $\contSet{I}_k\gets\contSet{I}_{k,\MLP{}}\oplus-\contSet{I}_{k,\TC{}}$
        \State $\nnHiddenSet_{\layerIdx+1} \gets M_k\,\nnHiddenSet_\layerIdx\oplus\contSet{I}_k$ \Comment{Difference enclosure (\cref{eq:diff-enclosure})}
        \State \Return $\nnHiddenSet_{\layerIdx+1} \subseteq [-\delta,\delta]^{\numModelDim}$? \Comment{Containment check}
    \end{algorithmic}
\end{algorithm}

\begin{proposition}[Soundness]
    \label{prop:diff-enclosure}
    For an axis-aligned input box $\nnHiddenSet_\layerIdx\subset\R^{\numModelDim}$, an \MLP{} and its respective \IRN{},
    \cref{alg:diff-enclosure} verifies $\lVert f_\layerIdx(\hAt)\rVert_\infty\le\nnOutBound$ for all $\hAt\in\nnHiddenSet_\layerIdx$ in $\mathcal{O}(\numModelDim^2\,\numIRNdim)$ time and $\mathcal{O}(\numModelDim\,\numIRNdim)$ memory.
\end{proposition}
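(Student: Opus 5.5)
Two things have to be shown: soundness, meaning a positive answer of \cref{alg:diff-enclosure} implies $\lVert f_\layerIdx(\hAt)\rVert_\infty\le\nnOutBound$ on all of $\nnHiddenSet_\layerIdx$, and the stated complexity. For soundness I would establish the chain of inclusions
\begin{equation*}
f_\layerIdx(\nnHiddenSet_\layerIdx)\subseteq\opEnclose{f_\layerIdx}{\nnHiddenSet_\layerIdx}\subseteq[-\nnOutBound,\nnOutBound]^{\numModelDim}.
\end{equation*}
The second inclusion is exactly the containment check returned in the last line, so the work lies in the first.

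Fix $\hAt\in\nnHiddenSet_\layerIdx$. First, line~1 computes exactly the interval hull of the pre-activations $W^{\mathrm{enc}}_\layerIdx\hAt+b^{\mathrm{enc}}_\layerIdx$ over the box, by \cref{eq:pre-activation-bounds}; at minimum it contains them. Second, the enclosure of \cref{eq:relax} is valid on any superset of the true pre-activation range. Hence for this $\hAt$ there is a $d\in\shortI{\underline d_\layerIdx}{\overline d_\layerIdx}$ with
\begin{equation*}
\nnActFun(W^{\mathrm{enc}}_\layerIdx\hAt+b^{\mathrm{enc}}_\layerIdx)=\diag{m_\layerIdx}(W^{\mathrm{enc}}_\layerIdx\hAt+b^{\mathrm{enc}}_\layerIdx)+t_\layerIdx+d.
\end{equation*}
Applying $W^{\mathrm{dec}}_\layerIdx$ and adding $b^{\mathrm{dec}}_\layerIdx$ gives $\layerTC(\hAt)=M_{k,\TC}\hAt+i_{\TC}$ for some $i_{\TC}\in\contSet{I}_{k,\TC}$. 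This uses only distributivity of linear maps over Minkowski sums, i.e., the rewriting in \cref{eq:tc-enclosure}.

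The essential point is that \emph{the same} $\hAt$ appears in the linear term. Likewise, the \MLP{} verifier's enclosure \cref{eq:mlp-enclosure} gives $\layerMLP(\hAt)=M_{k,\MLP}\hAt+i_{\MLP}$ for some $i_{\MLP}\in\contSet{I}_{k,\MLP}$. I would take this pointwise, dependency-preserving form as the contract of the verifier, since zonotope propagation keeps the input generators shared. Subtracting yields
\begin{equation*}
f_\layerIdx(\hAt)=(M_{k,\MLP}-M_{k,\TC})\hAt+(i_{\MLP}-i_{\TC})\in M_k\nnHiddenSet_\layerIdx\oplus\contSet{I}_k,
\end{equation*}
which is the first inclusion. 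The SAE case follows by setting $M_{k,\MLP}=I$ and $\contSet{I}_{k,\MLP}=\{0\}$.

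I expect this step to be the main obstacle. If the \MLP{} enclosure were only stated set-wise, subtracting two sets would be unsound unless the input correlation is preserved. So I would state explicitly that both enclosures are affine-in-input plus an additive error set, with the identical input variable.

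For the complexity, the costs are as follows.
\begin{itemize}
\item Line~1 forms $W^{\mathrm{enc}}_\layerIdx$ times the center plus $|W^{\mathrm{enc}}_\layerIdx|$ times the radius, in $\mathcal{O}(\numModelDim\numIRNdim)$.
\item Line~2 is elementwise, in $\mathcal{O}(\numIRNdim)$.
\item Forming $M_{k,\TC}=W^{\mathrm{dec}}_\layerIdx\diag{m_\layerIdx}W^{\mathrm{enc}}_\layerIdx$ costs $\mathcal{O}(\numModelDim^2\numIRNdim)$; scaling columns first is $\mathcal{O}(\numModelDim\numIRNdim)$, and the matrix product dominates.
\item The interval $\contSet{I}_{k,\TC}$ is obtained as $W^{\mathrm{dec}}_\layerIdx$ applied to an interval, in $\mathcal{O}(\numModelDim\numIRNdim)$.
\item Lines~5--7 operate on a $\numModelDim$-dimensional zonotope with $\numModelDim$ generators (the box), so they cost $\mathcal{O}(\numModelDim^2)$.
\end{itemize}
Memory is dominated by storing $W^{\mathrm{enc}}_\layerIdx$, $W^{\mathrm{dec}}_\layerIdx$ and the $\numIRNdim$-vectors, giving $\mathcal{O}(\numModelDim\numIRNdim)$. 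The $\numIRNdim$-dimensional zonotope, which would require $\numIRNdim\times\numModelDim$ generators, is never materialized.

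For the \MLP{} branch I would have to assume that its enclosure cost is at most the same order, since $d_{\MLP}\le\numIRNdim$ in the considered models.
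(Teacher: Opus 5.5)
Your soundness argument is correct and is the same chain of outer approximations the paper uses. The paper's proof only asserts that soundness follows from the verifier's soundness and from each additional step being outer-approximative, deferring to the Lean check. Your explicit shared-input contract $\forall \hAt\in\nnHiddenSet_\layerIdx\ \exists i\in\contSet{I}_{k,\MLP}\colon \layerMLP(\hAt)=M_{k,\MLP}\hAt+i$ is therefore a real sharpening, not pedantry. Read purely set-wise, \cref{eq:mlp-enclosure} and \cref{eq:tc-enclosure} do not imply \cref{eq:diff-enclosure}. For instance, on $\nnHiddenSet_\layerIdx=[-1,1]$ the maps $h\mapsto h$ and $h\mapsto -h$ both have image $1\cdot\nnHiddenSet_\layerIdx\oplus\{0\}$, yet their difference $2h$ is not contained in the resulting enclosure $\{0\}$. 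Your cost accounting for lines~1--3 and 5--7 also matches the paper's.

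The gap is the \MLP{} branch (line~4) of the complexity claim. You assume its cost rather than derive it, and the justification you offer, $\numMLPdim\le\numIRNdim$, is not enough. Under the propagation model the paper uses, each activation enclosure appends one error generator per hidden neuron. After the nonlinearity, the hidden zonotope therefore has a $\numMLPdim\times(\numModelDim+\numMLPdim)$ generator matrix, and mapping it back to $\R^{\numModelDim}$ costs $\mathcal{O}(\numModelDim\,\numMLPdim(\numModelDim+\numMLPdim))$ time. Knowing only $\numMLPdim\le\numIRNdim$, time and memory are then bounded merely by the $\mathcal{O}(\numModelDim\,\numIRNdim^2)$ and $\mathcal{O}(\numIRNdim^2)$ of naive \IRN{} propagation, which is exactly what the collapse trick is meant to avoid. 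These same error generators are also why the naive \IRN{} zonotope would be $\numIRNdim\times(\numModelDim+\numIRNdim)$ rather than the $\numIRNdim\times\numModelDim$ you state.

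The missing ingredient is $\numMLPdim\in\Theta(\numModelDim)$, which holds for every model in \cref{tab:model_specs}. With it, the running zonotope has $\mathcal{O}(\numModelDim)$ generators of length $\mathcal{O}(\numModelDim)$, and each affine map costs $\mathcal{O}(\numModelDim^3)$. The branch then totals $\mathcal{O}(\numLayersMLP\,\numModelDim^3)$ time and $\mathcal{O}(\numModelDim^2)$ memory. This is dominated by the \IRN{} branch because $\numIRNdim\gg\numModelDim,\numLayersMLP$.
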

\begin{proof}
    \textit{Soundness.}
    Soundness follows from the soundness of the set-based verifier and each additional step being outer-approximative.
    Each of these additional steps is machine-checked in \textsc{Lean}~4 in the supplementary material.

    \textit{Complexity.}
    Let $\numIRNdim$ be the \IRN{} hidden width with $\numModelDim\ll\numIRNdim$, and the \MLP{} hidden width $\numMLPdim\in\Theta(\numModelDim)$ (\cref{tab:model_specs,tab:irn_specs}); the input box contributes $\numModelDim$ generators.

    \textit{(i) \IRN{} branch (lines~1--3).}
    Propagating the input set explicitly through encoder, activation, and decoder builds a zonotope in $\R^{\numIRNdim}$: the activation enclosure appends one error generator per neuron, yielding a dense $\numIRNdim\times(\numModelDim+\numIRNdim)$ generator matrix, so decoding it would cost $\mathcal{O}(\numModelDim\,\numIRNdim^2)$ time and $\mathcal{O}(\numIRNdim^2)$ memory.
    The one-step look-ahead instead reads the pre-activation box in $\mathcal{O}(\numModelDim\,\numIRNdim)$, computes the $\numIRNdim$ enclosure parameters in $\mathcal{O}(\numIRNdim)$, and collapses $M_{k,\TC{}}=W^{\mathrm{dec}}_\layerIdx\diag{m_\layerIdx}W^{\mathrm{enc}}_\layerIdx$ in $\mathcal{O}(\numModelDim^2\,\numIRNdim)$ time and $\mathcal{O}(\numModelDim\,\numIRNdim)$ memory---a factor $\numIRNdim/\numModelDim$ cheaper in both.

    \textit{(ii) \MLP{} branch (line~4).}
    Here the collapse trick does not apply, as the \MLP{} interleaves its $\numLayersMLP$ affine maps with nonlinearities; the set is propagated layer by layer instead.
    Since $\numMLPdim\in\Theta(\numModelDim)$, the running zonotope keeps $\mathcal{O}(\numModelDim)$ generators each of length $\mathcal{O}(\numModelDim)$, and applying an affine map to this $\mathcal{O}(\numModelDim)\times\mathcal{O}(\numModelDim)$ generator matrix costs $\mathcal{O}(\numModelDim^3)$; over the $\numLayersMLP$ layers the branch is $\mathcal{O}(\numLayersMLP\,\numModelDim^3)$ time and $\mathcal{O}(\numModelDim^2)$ memory.
    As $\numIRNdim\gg\numModelDim,\,\numLayersMLP$, this branch is negligible compared to (i).

    \textit{(iii) Difference and containment (lines~5--7).}
    All operands now live in $\R^{\numModelDim}$: forming $M_k=M_{k,\MLP{}}-M_{k,\TC{}}$ and $\shortZ{c'}{G'}=M_k\,\nnHiddenSet_\layerIdx\oplus\contSet{I}_k$ costs $\mathcal{O}(\numModelDim^2)$, and $\nnOutBound=\max_j(\lvert c'_{(j)}\rvert+\lVert G'_{(j,:)}\rVert_1)$ is computed in $\mathcal{O}(\numModelDim^2)$.

    In summary, the cost is $\mathcal{O}(\numModelDim^2\,\numIRNdim)$ time and $\mathcal{O}(\numModelDim\,\numIRNdim)$ memory:
    The look-ahead \IRN{} branch dominates the \MLP{} branch since $\numIRNdim\gg\numModelDim,\,\numLayersMLP$---yet it stays a factor $\numIRNdim/\numModelDim$ below the $\mathcal{O}(\numModelDim\,\numIRNdim^2)$ time and $\mathcal{O}(\numIRNdim^2)$ memory of a naive \IRN{} zonotope propagation.
\end{proof}

% --------------------------------------------

\subsection{Sound Lower Bound on the Jaccard Index}
\label{sec:lower-bound-jaccard}

\Cref{sec:fragility} reports an attack-derived upper bound $\jaccUp$ on the worst-case \topk{} feature Jaccard:
A single adversarial input witnesses that the overlap can drop at least this far.
The intermediate pre-activation bounds (\cref{eq:pre-activation-bounds}) used to certify $\deltaVerified$ (\cref{sec:per_layer}) also yield a certified lower bound $\jaccLo$ on the \topk{} Jaccard,
so that
\begin{equation}
    \jaccLo \;\le\; \jacc^* \;\le\; \jaccUp,
\end{equation}
where $\jacc^*$ is the true worst-case Jaccard over the hull $\nnHiddenSet_\layerIdx$.

In more detail, the \IRN{} feature pre-activations are an affine function of the residual stream,
$z(\hAt)=W^{\mathrm{enc}}_\layerIdx \hAt + b^{\mathrm{enc}}_\layerIdx$.
For the axis-aligned input box $\nnHiddenSet_\layerIdx=[\hAt-\nnPertRadius,\hAt+\nnPertRadius]$,
interval arithmetic \citep{jaulin2001interval} gives the \emph{exact} per-feature pre-activation range $[\underline z,\overline z]$,
and the subsequent activation (e.g.\ $\ReLU$) is monotone, hence preserves the ranking among co-active features.

Let $T_0$ be the \topk{} features at the clean input $\hAt$.
A feature $i\in T_0$ is guaranteed to remain in the \topk{} for every input in $\nnHiddenSet_\layerIdx$ iff fewer than $\numTop$ other features can possibly outrank its lower bound,
\begin{equation}
    \bigl|\{\,j\neq i\colon \overline z_{(j)} \ge \underline z_{(i)}\,\}\bigr| < \numTop.
\end{equation}
That is, an adversary first fills the \topk{} with the strongest contestant features%
---any feature whose upper bound reaches $i$'s floor $\underline z_{(i)}$; the comparison is
non-strict so that the bound stays sound when two features tie exactly---%
and the feature $i$ ``survives'' any adversarial attack only if fewer than $\numTop$ such contestants exist.
Thus, the certified survivor set over $\nnHiddenSet_\layerIdx$ is given by:
\begin{equation}
    \contSet{S} = \argmin_{\hAt\in\nnHiddenSet_\layerIdx\colon T(\hAt)} |T(\hAt)|,
\end{equation}
where $T(\hAt)$ returns the survivor set for a particular input $\hAt\in\nnHiddenSet_\layerIdx$.
Using $\contSet{S}$ and as $|T_0|=\numTop$, a certified lower bound on the Jaccard index is given by:
\begin{equation}
    \jaccLo = \frac{|\contSet{S}|}{2\numTop-|\contSet{S}|}.
\end{equation}

\section{Evaluation Details and Ablation Studies}
\label{sec:eval_and_ablations}

\subsection{Evaluation Details}
\label{sec:evaluation_details}

This appendix first states the setup shared by all experiments,
with subsequent details specific to individual experiments.

\paragraph{Hardware and software.}
All experiments run on an NVIDIA GeForce RTX~3080 Laptop GPU (16~GB VRAM), Intel Core i7-11800H (8 cores, 2.3~GHz), Windows~11.
The pipeline has two halves with different stacks.
Data preparation such as model decomposition and \IRN{} export runs in Python~3.13 with PyTorch~2.10 on the GPU.
Everything that touches the verifier runs in MATLAB~R2024b with CORA~\citep{Althoff2015ARCH,Althoff2025manual,koller2025out}.
The median verification time per (sample, block $k$) is $\sim\!1.4$~min.

\paragraph{Models and IRNs.}
\Cref{tab:model_specs} lists the open-weight language models we analyzed and \Cref{tab:irn_specs} the corresponding sparse autoencoders and transcoders.
Both are loaded directly from HuggingFace (click HF in the last column for the repository page).

\begin{table}[h]
    \small\centering
    \caption{\textbf{Large language models.}
        $\numModelDim$: residual-stream dimension;
        $\numMLPdim$: \MLP{} hidden width (post-gating, where applicable);
        $\numLayers$: number of transformer blocks.
    }
    \label{tab:model_specs}
    \resizebox{\linewidth}{!}{%
    \begin{tabular}{llrrrrc}
        \toprule
        Model         & Vendor   & params & $\numModelDim$ & $\numMLPdim$ & $\numLayers$ & Ref.                                                                             \\
        \midrule
        \GPT{}        & OpenAI   & $124$M & $768$          & $3{,}072$    & $12$         & \citet{radford2019gpt2}, \href{https://huggingface.co/gpt2}{HF}                                      \\
        \GemmaTwo{}   & Google   & $2.6$B & $2{,}304$      & $9{,}216$    & $26$         & \citet{gemmateam2024gemma2}, \href{https://huggingface.co/google/gemma-2-2b}{HF}                         \\
        \GemmaThree{} & Google   & $1.0$B & $1{,}152$      & $6{,}912$    & $26$         & \citet{gemmateam2025gemma3}, \href{https://huggingface.co/google/gemma-3-1b-pt}{HF}                      \\
        \Llama{}      & Meta     & $1.2$B & $2{,}048$      & $8{,}192$    & $16$         & \citet{grattafiori2024llama3}, \href{https://huggingface.co/meta-llama/Llama-3.2-1B}{HF}                   \\
        \Qwen{}       & DeepSeek & $1.8$B & $1{,}536$      & $8{,}960$    & $28$         & \citeauthor{deepseekai2025r1}, \citet{qwen2025qwen25}, \href{https://huggingface.co/deepseek-ai/DeepSeek-R1-Distill-Qwen-1.5B}{HF} \\
        \bottomrule
    \end{tabular}
    }
\end{table}

\begin{table}[h]
    \small\centering
    \caption{\textbf{Interpretable replacement networks.}
        $\numIRNdim$: dictionary size;
        activation: feature non-linearity;
        hook: residual position read by the \IRN{} (pre = before block, post = after block).
        \SAE{}s usually reconstruct the residual stream (identity), while \TC{}s replace the \MLP{} block.}
    \label{tab:irn_specs}
    \begin{tabular}{lllrlc c}
        \toprule
        Model         & Type   & Activation  & $\numIRNdim$ & Hook & Ref.                                                                           & Replaces ($\numMLPdim$, act)  \\
        \midrule
        \GPT{}        & \SAE{} & \ReLU{}     & $24{,}576$   & pre  & \href{https://huggingface.co/jbloom/GPT2-Small-SAEs-Reformatted}{HF} & residual (identity) \\
        \GPT{}        & \TC{}  & \ReLU{}     & $24{,}576$   & pre  & \href{https://huggingface.co/jacobdunefsky/gpt2small-transcoders}{HF} & \MLP{} ($3{,}072$, \GELU{}) \\
        \GemmaTwo{}   & \SAE{} & \JumpReLU{} & $16{,}384$   & post & \href{https://huggingface.co/google/gemma-scope-2b-pt-res}{HF} & residual (identity) \\
        \GemmaTwo{}   & \TC{}  & \JumpReLU{} & $16{,}384$   & post & \href{https://huggingface.co/google/gemma-scope-2b-pt-transcoders}{HF} & \MLP{} ($9{,}216$, \GeGLU{}) \\
        \GemmaThree{} & \SAE{} & \JumpReLU{} & $16{,}384$   & post & \href{https://huggingface.co/google/gemma-scope-2-1b-pt}{HF} & residual (identity) \\
        \GemmaThree{} & \TC{}  & \JumpReLU{} & $16{,}384$   & post & \href{https://huggingface.co/google/gemma-scope-2-1b-pt}{HF} & \MLP{} ($6{,}912$, \GeGLU{}) \\
        \Llama{}      & \SAE{} & \TopK{}     & $131{,}072$  & post & \href{https://huggingface.co/EleutherAI/sae-Llama-3.2-1B-131k}{HF} & residual (identity) \\
        \Llama{}      & \TC{}  & \TopK{}     & $131{,}072$  & post & \href{https://huggingface.co/EleutherAI/skip-transcoder-Llama-3.2-1B-131k}{HF} & \MLP{} ($8{,}192$, \SwiGLU{}) \\
        \Qwen{}       & \SAE{} & \TopK{}     & $65{,}536$   & post & \href{https://huggingface.co/EleutherAI/sae-DeepSeek-R1-Distill-Qwen-1.5B-65k}{HF} & residual (identity) \\
        \bottomrule
    \end{tabular}
\end{table}

\paragraph{Dataset and input set.}
Input sentences are drawn from WikiText-2 \citep{merity2017pointer}, the canonical corpus for \GPT{}.
Within each sentence, we pick a target content word
and build the input set $\nnHiddenSet_\layerIdx$ from semantics-preserving paraphrases of that sentence (\Cref{sec:fragility}).
As LLMs typically have causal masking, only words before the target token affect the residual stream at the target position,
so all replacements are restricted to that prefix.
Targets are chosen to be polysemous (the content word with the most WordNet senses), which stresses feature attribution the most.
We collect $1{,}000$ such sentences and split them $600/200/200$ into train/val/test datasets (\cref{tab:hull_eps}).
For each sentence, we forward all paraphrase variants, record $\hAdv$ at layer~$\layerIdx$ for the target token,
and take the per-coordinate interval hull as that sample's input box $\nnHiddenSet_k$.
The radius of $\nnHiddenSet_k$ is thus per sample, i.e., each sentence is verified inside its own hull rather than a single shared radius.
This is in contrast to the usual neural network verification literature, but we chose this setting so that the certified region tracks where the model actually operates.
We include uniform-radius experiments in an ablation study in \Cref{sec:eps_sweep}.

\paragraph{Bound search.}
For each (sample, layer) we find the smallest verifiable $\deltaVerified$ by a walk-then-bisect search (\Cref{alg:verify_eps}).
We start from $\nnOutBound = 100$ (seeded up to $2\times$ the layer's median $\deltaPGD$ when that exceeds $100$, since deep layers are far less faithful),
and call the verifier on the diff network over the sample's hull:
on \token{VERIFIED}, we halve $\nnOutBound$, and on counterexample \token{CEX} or \token{TIMEOUT}, we double it.
Once a bracket of a verified upper bound and an unverified lower bound is reached, we switch to geometric bisection.
The search is capped at $12$ iterations per sample with a per-call timeout of $10$~s.
The lower bound $\deltaPGD$ comes instead from the strongest adversary found by PGD,
so reporting $\deltaPGD \le \deltaTrue \le \deltaVerified$ is sound.
Please note that we use the geometric mean during the binary search as the $\delta$ ranges over many orders of magnitude;
thus, this effectively computes the midway point on a log scale.

\begin{algorithm}[h]
    \caption{Per-(sample, layer) verified bound $\nnOutBound$ search.}
    \label{alg:verify_eps}
    \begin{algorithmic}[1]
        \Require Diff.\ network $f = \layerMLP - \layerIRN$, input box
        $\nnHiddenSet$, start $\nnOutBound_0$, max
        iterations $N$, timeout $\tau$
        \State $\nnOutBound \gets \nnOutBound_0$;\quad $\delta_\text{low} \gets 0$;\quad $\delta_\text{high} \gets \infty$
            \Comment{$\delta_\text{low}$: largest unverified, $\delta_\text{high}$: smallest verified}
        \For{$i = 1$ \textbf{to} $N$}
            \State Result $r \gets \Call{Verify}{f, \nnHiddenSet, \nnOutBound, \tau}$
            \If{$r = $ \token{VERIFIED}} \State $\delta_\text{high} \gets \nnOutBound$
            \Else{} \Comment{\token{CEX} or  \token{TIMEOUT}}
                \State $\delta_\text{low} \gets \nnOutBound$
            \EndIf
            \If{$\delta_\text{low} > 0$ \textbf{and} $\delta_\text{high} < \infty$}
                \State $\nnOutBound \gets \sqrt{\delta_\text{low} \cdot \delta_\text{high}}$
                    \Comment{geometric bisection once bracketed}
            \ElsIf{$r = $ \token{VERIFIED}} \State $\nnOutBound \gets \nnOutBound / 2$
            \Else{} \State $\nnOutBound \gets 2\,\nnOutBound$
                \Comment{walk up until decisive}
            \EndIf
        \EndFor
        \State \Return $\deltaVerified \gets \delta_\text{high}$ \Comment{smallest verified bound}
    \end{algorithmic}
\end{algorithm}

Both attack-derived bounds use the same $\ell_\infty$ PGD adversary, run inside each sample's hull for
$200$ iterations, $4$ random restarts, projecting back into $\nnHiddenSet_k$ after each signed-gradient step.
For $\deltaPGD$, the objective is to maximize $\|\layerMLP(\hAdv) - \layerIRN(\hAdv)\|_\infty$ (the largest diff we can witness);
for $\jaccUp$, it is to maximize the drop in top-$20$ feature overlap between the clean and perturbed activation.
The respective other bound is found using the CORA verifier.

\paragraph{Training regimes.}
The retrained \IRN{}s of \Cref{sec:way_forward} all fine-tune the \IRN{} for $50$ epochs (batch~$8$, Adam at $\mathrm{lr} = 10^{-3}$), differing only in the loss:
\trainStd{} keeps the clean reconstruction objective;
\trainPGD{}~\citep{madry2018towards} augments it with $5$-step PGD adversaries;
\trainSet{}~\citep{koller2024setbasedtraining} adds the reachable-set volume loss with default weight $\tau = 0.1$,
ramped in over a $5$-epoch warmup and $10$-epoch noise rampup up to the largest train-split hull radius.
Please note that set-based training can require a lot of memory due to the represented sets;
however, this can be circumvented to a certain degree by capturing certain parts in an interval error \citep[Appendix~A]{koller2024setbasedtraining}.
In this work, we capture all accumulated approximation errors in an interval error during training.
\trainLoRA{} variants restrict the update to a rank-$16$ adapter \citep{hu2022lora}.
Every variant is verified with the identical per-sample-hull protocol on the held-out test set.
Ablation studies on the weighting parameter $\tau$ and the LoRA rank are deferred to \Cref{sec:set_tau,sec:lora_rank}, respectively.

% ----

\subsection{Ablation Studies}
\label{sec:ablations}

Finally, we provide further experiments and ablation studies in this subsection.

\subsubsection{Input Radius: Default Hull and Uniform Radius}
\label{sec:eps_sweep}

\paragraph{Default input set.}
In the main section, we verify each sample inside its \emph{own} paraphrase hull rather than a fixed box as described in \cref{sec:evaluation_details}.
\Cref{tab:hull_eps} reports its distribution for \GPT{} layer~$6$ as an example.

\begin{table}[h]
    \small\centering
    \caption{Per-sample paraphrase-hull $\ell_\infty$ radius $\nnPertRadius$ on \GPT{} layer~$6$, i.e., the radius of the default verification input set.}
    \label{tab:hull_eps}
    \begin{tabular}{l r C C C C C}
        \toprule
        split & $n$ & \text{median} & \text{mean} & \text{$q_{25}$} & \text{$q_{75}$} & \text{max} \\
        \midrule
        train & 600 & 0.73          & 1.10        & 0.39            & 1.36            & 8.00       \\
        val   & 200 & 0.82          & 1.22        & 0.44            & 1.66            & 8.65       \\
        test  & 200 & 0.69          & 1.03        & 0.35            & 1.36            & 7.95       \\
        \bottomrule
    \end{tabular}
\end{table}

\paragraph{Uniform-radius input set.}
In this section, we compare a uniform $\ell_\infty$ input radius $\nnPertRadius \in \{0.01, 0.05, 0.1, 0.5\}$,
and verify the pretrained baseline (\trainStd) against matching \trainSet- and \trainPGD-trained \SAE{}s on \GPT{} layer~$6$ over the same held-out test dataset.
Each retrained variant is fine-tuned from OpenAI's \SAE{} baseline.

\paragraph{Result.}
\Cref{tab:eps_sweep} reports the per-sample verified $\deltaVerified$ distribution for \trainStd, \trainPGD, and \trainSet{} at each radius.
Generally, adversarial and verification-aware training improve over the baseline, with the improvement becoming larger with larger $\nnPertRadius$.
One notable exception is $\nnPertRadius=0.01$, likely as the accuracy-robustness tradeoff is not yet beneficial.
Please note that $\deltaVerified$ is the $\ell_\infty$ radius of a $\numModelDim$-dimensional cube containing the certified diff-network output set.
Thus, its volume scales as $(2\deltaVerified)^{\numModelDim}$ with $\numModelDim = 768$ for \GPT{},
so even modest radius gains compound dimensionally into large volume gains.
The $\log_{10}\Delta_V$ column of \Cref{tab:eps_sweep} reports this radius-volume relation.

\begin{table}[h]
    \small\centering
    \caption{Verified output bound $\nnOutBound$ on \GPT{} \SAE{} layer~$6$, uniform $\ell_\infty$ input radius $\nnPertRadius$.}
    \label{tab:eps_sweep}
    \begin{tabular}{lcrrrrrrr}
        \toprule
        & & \multicolumn{5}{c}{$\deltaVerified\ (\downarrow)$} & & \\
        \cmidrule(lr){3-7}
        $\nnPertRadius$ & method & min & $q_{25}$ & median & $q_{75}$ & max & $\Delta_{\mathrm{med}}\ (\downarrow)$ & $\log_{10}\Delta_V\ (\downarrow)$ \\
        \midrule
        \multirow{3}{*}{$0.01$} & \trainStd & \textbf{4.05}  & \textbf{7.93}  & \textbf{10.98} & \textbf{24.21} & 194.66          & ---        & ---     \\
        & \trainPGD & 24.73          & 28.43          & 30.22          & 32.12          & \textbf{65.73}  & $+175.3\%$ & $+337.7$ \\
        & \trainSet & 23.94          & 28.59          & 30.38          & 32.29          & 66.62           & $+176.8\%$ & $+339.6$ \\
        \midrule
        \multirow{3}{*}{$0.05$} & \trainStd & \textbf{7.43}  & 36.14          & 67.71          & 135.89         & 470.56          & ---        & ---     \\
        & \trainPGD & 28.78          & \textbf{35.50} & 40.59          & 53.55          & 225.30          & $-40.1\%$  & $-170.7$ \\
        & \trainSet & 28.47          & 35.55          & \textbf{40.26} & \textbf{50.24} & \textbf{214.59} & {\boldmath$-40.5\%$} & {\boldmath$-173.4$} \\
        \midrule
        \multirow{3}{*}{$0.1$}  & \trainStd & \textbf{16.21} & 203.02         & 320.36         & 496.74         & 1083.40         & ---        & ---     \\
        & \trainPGD & 38.35          & 107.88         & 173.97         & 277.54         & 757.83          & $-45.7\%$  & $-203.6$ \\
        & \trainSet & 31.05          & \textbf{58.34} & \textbf{94.60} & \textbf{189.08}& \textbf{665.47} & {\boldmath$-70.5\%$} & {\boldmath$-406.8$} \\
        \midrule
        \multirow{3}{*}{$0.5$}  & \trainStd & 25600          & 51200          & 51200          & 51200          & 51200           & ---        & ---     \\
        & \trainPGD & 18101.9        & 25600          & 25600          & \textbf{25600} & 51200           & $-50.0\%$  & $-231.2$ \\
        & \trainSet & \textbf{2934.4}& \textbf{18102}& \textbf{18102}& \textbf{25600}& 51200           & {\boldmath$-64.6\%$} & {\boldmath$-346.8$} \\
        \bottomrule
    \end{tabular}
\end{table}

\subsubsection{Sparsity of the Retrained \IRN{}s}
\label{sec:sparsity}

Fine-tuning an \IRN{} for faithfulness could in principle buy robustness by simply activating more features,
which would defeat the purpose of a sparse, interpretable dictionary.
Note that none of our training regimes explicitly adds a sparsity penalty to the loss (\cref{sec:evaluation_details}),
i.e., sparsity is only inherited from the pretrained \IRN{}.
\Cref{tab:sparsity} therefore reports how many of the $\numIRNdim = 24{,}576$ features of the \GPT{} layer-$6$ \SAE{} are active,
counted both at the clean input ($L_0$) and over the entire perturbation hull $\nnHiddenSet_\layerIdx$.

Reassuringly, the retrained \IRN{}s do not become denser but sparser:
\trainSet{} activates $5\times$ fewer features than the baseline on clean inputs,
and every regime reduces the number of features that can be active anywhere in the hull.
The single exception is \trainSet\,+\,\trainLoRA{}, which roughly doubles the clean feature count
while still admitting the fewest active features over the hull,
i.e., the low-rank update trades a denser clean code for a more stable one.
We acknowledge that the reconstruction loss of the clean sentence suffers slightly,
which is in line with the well-known accuracy-robustness tradeoff observed in adversarial training~\citep{zhang2019theoretically}.

\begin{table}[h]
    \small\centering
    \caption{\textbf{Feature sparsity per training regime} on the \GPT{} layer-$6$ \SAE{} ($\numIRNdim = 24{,}576$), averaged over the $200$ test samples.
        $L_0$ counts the features active at the clean input, the hull columns those active for at least one point of $\nnHiddenSet_\layerIdx$;
        ratios are relative to \trainStd{}, lower is sparser.}
    \label{tab:sparsity}
    \begin{tabular}{l C C C C}
        \toprule
        & \multicolumn{2}{c}{$L_0$ (clean)} & \multicolumn{2}{c}{active over hull} \\
        \cmidrule(lr){2-3} \cmidrule(lr){4-5}
        Regime & \text{count} & \text{ratio} & \text{count} & \text{ratio} \\
        \midrule
        \trainStd                  & 74.4           & 1.00          & 8433.2          & 1.00          \\
        \trainPGD                  & 52.8           & 0.71          & 7966.8          & 0.94          \\
        \trainSet                  & \textbf{13.9}  & \textbf{0.19} & 5568.9          & 0.66          \\
        \trainPGD\,+\,\trainLoRA   & 44.0           & 0.59          & 6893.1          & 0.82          \\
        \trainSet\,+\,\trainLoRA   & 143.8          & 1.93          & \textbf{4383.8} & \textbf{0.52} \\
        \bottomrule
    \end{tabular}
\end{table}

\subsubsection{Set-Based Training: Loss Weighting}
\label{sec:set_tau}

Set-based training interpolates between the center (reconstruction) loss and the reachable-set volume loss as $(1-\tau)\,\mathcal{L}_{\mathrm{center}} + \tau\,\mathcal{L}_{\mathrm{vol}}$,
with $\tau = 0.1$ being the default \citep{koller2024setbasedtraining}.
\Cref{tab:set_tau} sweeps $\tau \in \{0.1, 0.5, 0.9\}$ on \GPT{} layer~$6$.
The verified bound is remarkably flat in $\tau$:
For full fine-tuning the median moves only within $[47.8, 60.3]$ and the reconstruction loss within $[15.9, 29.7]$.

\begin{table}[h]
    \small\centering
    \caption{
        Impact of the volume weight $\tau$ on \GPT{} layer~$6$ \SAE{}:
        Clean reconstruction MSE and verified $\deltaVerified$ distribution.
    }
    \label{tab:set_tau}
    \begin{tabular}{ll C C C C C C}
        \toprule
        & & \text{Recon.} (\downarrow) & \multicolumn{5}{c}{$\deltaVerified\ (\downarrow)$} \\
        \cmidrule(lr){3-3} \cmidrule(lr){4-8}
        & $\tau$ & \text{(clean)} & \text{min} & \text{$q_{25}$} & \text{median} & \text{$q_{75}$} & \text{max} \\
        \midrule
        \multirow{3}{*}{\trainSet}
        & $0.1$  & 16.0  & 25.0 & 35.5 & 50.5 & 872.4 & 51200 \\
        & $0.5$  & 29.7  & 24.5 & 37.1 & 60.3 & 931.0 & 51200 \\
        & $0.9$  & 15.9  & 24.7 & 35.6 & 47.8 & 807.0 & 51200 \\
        % \midrule
        % \multirow{3}{*}{\trainSet\,+\,\trainLoRA}
        % & $0.1$  & 122.8 & 28.0 & 82.1 & 135.8 & 398.8 & 51200 \\
        % & $0.5$  & 114.9 & 30.9 & 66.6 & 113.6 & 448.2 & 51200 \\
        % & $0.9$  & 125.0 & 29.3 & 70.0 & 121.2 & 397.8 & 51200 \\
        \bottomrule
    \end{tabular}
\end{table}

\subsubsection{LoRA Rank}
\label{sec:lora_rank}

\Cref{tab:lora_rank} sweeps the \trainLoRA{} rank $r \in \{8, 16, 32\}$ for set-based training at $\tau = 0.1$.
As expected, a higher rank tends to obtain tighter $\deltaVerified$ bounds.
However, please note that this comes with additional memory requirements and choosing a very large rank might not always be feasible for large \IRN{}s.

\begin{table}[h]
    \small\centering
    \caption{\trainLoRA{} rank ablation for set-based training ($\tau = 0.1$) on \GPT{} layer~$6$ \SAE{}.}
    \label{tab:lora_rank}
    \begin{tabular}{l C C C C C C}
        \toprule
        & \text{Recon.} (\downarrow) & \multicolumn{5}{c}{$\deltaVerified\ (\downarrow)$} \\
        \cmidrule(lr){2-2} \cmidrule(lr){3-7}
        $r$  & \text{(clean)} & \text{min} & \text{$q_{25}$} & \text{median} & \text{$q_{75}$} & \text{max} \\
        \midrule
        $8$   & 132.4 & 26.5 & 102.4 & 169.8 & 1215.3 & 51200 \\
        $16$  & 122.8 & 28.0 & 82.1  & 135.8 & 398.8  & 51200 \\
        $32$  & 766.2 & 28.5 & 50.5  & 60.6  & 95.2   & 51200 \\
        \bottomrule
    \end{tabular}
\end{table}

\subsubsection{Scalability: Verification of Llama's \IRN{}}
\label{sec:scalability}

Verifying architectures other than \GPT{} primarily comes down to an engineering task, as the verification is done per layer, which differs only mildly between architectures.
To demonstrate this, we verify the largest \IRN{} in our suite, the \Llama{} \SAE{}.
This architecture varies in two ways: (i) $\numIRNdim= 131{,}072$ is much larger, and (ii) it uses \TopK{} activation before \ReLU{} is applied (\cref{tab:irn_specs}).
The large $\numIRNdim$ is not an issue using the optimization described in \cref{sec:per_layer};
however, the \TopK{} activation requires adaptation since \TopK{} is not elementwise:
Which $K$ features survive depends on their magnitude and varies across the input box.

% don't use the \TopK command here to print in bold
\paragraph{TopK enclosure.}
As in \cref{sec:lower-bound-jaccard}, the feature pre-activations are an affine function of the residual stream, $z(\hAt) = W^{\mathrm{enc}}_\layerIdx\,\hAt + b^{\mathrm{enc}}_\layerIdx$.
Thus, for an input box $\nnHiddenSet_\layerIdx$, interval arithmetic gives the exact per-feature range $z(\hAt) \in \shortI{\underline z}{\overline z}\subset\R^{\numIRNdim}$.
The \TopK{} gate keeps the $\numTop$ features with the largest pre-activations, that is, those above the selection threshold $\tau(\hAt)\in\R$, defined as the $\numTop$-th largest entry of $z(\hAt)$.

As $\hAt$ ranges over $\nnHiddenSet_\layerIdx$, each entry $i$ of $z(\hAt)$ moves within its interval $\shortI{\underline z_{(i)}}{\overline z_{(i)}}$, and the threshold $\tau(\hAt)$ moves with it.
Thus, reading out the $\numTop$-th largest entries of $\underline z_{(i)}, \overline z_{(i)}$ gives us $\underline\tau, \overline\tau$, respectively,
such that $\tau(\hAt) \in \shortI{\underline\tau}{\overline\tau}$ for all $\hAt \in \nnHiddenSet_\layerIdx$.

Comparing each feature's range against this threshold band $\shortI{\underline\tau}{\overline\tau}$ partitions the $\numIRNdim$ feature indices into three sets,
\begin{align}
    \label{eq:topk-partition}
    \begin{split}
        \Tina &= \{\, i\in[\numIRNdim] \colon \overline z_{(i)} < \underline\tau \,\}, \\
        \Tact &= \{\, i\in[\numIRNdim] \colon \underline z_{(i)} > \overline\tau \,\}, \\
        \Tund &= [\numIRNdim] \setminus (\Tina \cup \Tact),
    \end{split}
\end{align}
where the \emph{inactive} features $\Tina$ are never kept, the \emph{active} features $\Tact$ are always kept, and the \emph{undecided} features $\Tund$ are only sometimes kept.
The active features pass an elementwise \ReLU{} and fold into \cref{eq:diff-enclosure} unchanged;
only the undecided gates $i \in \Tund$ need extra care, as each is either \emph{off} (outputs $0$) or \emph{on} (outputs $\ReLU{}(z_{(i)})$).

Since \TopK{} keeps exactly $\numTop$ features and the $\lvert\Tact\rvert\leq\numTop$ active features are always among them, exactly $\nFree \coloneqq \numTop - \lvert\Tact\rvert$ of the undecided features are on for any given input $\hAt$.
Let $T(\hAt) \subseteq \Tund$ with $\lvert T(\hAt)\rvert = \nFree$ denote this input-dependent set.
To bound the decoded contribution of $T(\hAt)$ to an output coordinate $j$ over $\nnHiddenSet_\layerIdx$,
we define for each undecided feature $i \in \Tund$ its extreme admissible contribution:
\begin{align}
    \label{eq:cardinality-terms}
    \begin{split}
        a^+_{(i)} &= \max\!\bigl(0,\, W^{\mathrm{dec}}_{(i,j)} \ReLU{}(\overline z_{(i)})\bigr), \\
        a^-_{(i)} &= \min\!\bigl(0,\, W^{\mathrm{dec}}_{(i,j)} \ReLU{}(\overline z_{(i)})\bigr),
    \end{split}
\end{align}
where the outer $\max/\min$ select the upper/lower endpoint of the contribution over the feature being off ($0$) or on (up to $\ReLU{}(\overline z_{(i)})$) and the unknown sign of $W^{\mathrm{dec}}_{(i,j)}$, respectively.
We then sort $a^+$ in descending order and $a^-$ in ascending order.
Since only $\nFree$ of the undecided features fire, the contribution to coordinate $j$ is bounded by the $\nFree$ extreme terms on each side:
\begin{equation}
    \label{eq:cardinality-cap}
    \sum_{i \in T(\hAt)} W^{\mathrm{dec}}_{(i,j)} \ReLU{}(z_{(i)}) \;\in\; \Bigl[\, \sum_{r=1}^{\nFree} a^-_{(r)},\ \ \sum_{r=1}^{\nFree} a^+_{(r)} \,\Bigr].
\end{equation}
This interval can be plugged into \cref{eq:tc-enclosure}ff, which eventually gives us the verified upper bound $\deltaVerified$.

\paragraph{Results.}
\Cref{fig:scalability-gap} shows the per-layer faithfulness gap at the \pertMinor{} level across all $16$ layers of the \Llama{} \SAE{} (\TopK{}, $\numTop=32$, $\numIRNdim=131{,}072$),
mirroring \Cref{fig:verified-vs-pgd} for \GPT{}: a substantial gap is certified throughout, again widening in the deeper, less faithful layers.
Please note that the gap between $\deltaPGD$ and $\deltaVerified$ is larger here;
we primarily attribute this to the additional outer approximations stemming from the \TopK{} enclosure,
as it is expected that such an enclosure is worse than a simple elementwise activation function (such as \ReLU{} in the \GPT{} case),
and designing \IRN{}s with verification in mind helps to reduce the conservativeness.
\Cref{tab:scalability} reports the bounds $[\deltaPGD,\deltaVerified]$ at layer~$1$ for all three perturbation levels.
Despite the undecided set $\Tund$ growing to nearly the whole dictionary at \pertMajor{}, our enclosure obtains useful bounds on the faithfulness gap.

\begin{figure}[h]
    \centering
    \includetikz{./figures/evaluation/faithfulness_gap_llama/faithfulness_gap_llama}
    \caption{\textbf{Llama Scope.} Faithfulness gap of the \SAE{}.}
    \label{fig:scalability-gap}
\end{figure}

\begin{table}[h]
    \small\centering
    \caption{Faithfulness gap of \Llama{} \SAE{} on layer~$1$, per fragility level.}
    \label{tab:scalability}
    \begin{tabular}{l r c c c c c c}
        \toprule
        & & \multicolumn{3}{c}{$\deltaPGD$ $(\downarrow)$} & \multicolumn{3}{c}{$\deltaVerified$ $(\downarrow)$} \\
        \cmidrule(lr){3-5} \cmidrule(lr){6-8}
        Level & $\lvert\Tund\rvert$ $(\downarrow)$ & $q_{25}$ & median & $q_{75}$ & $q_{25}$ & median & $q_{75}$ \\
        \midrule
        \pertMinor{}  & $5{,}336$   & $0.034$ & $0.055$ & $0.087$ & $0.122$ & $0.225$ & $0.506$ \\
        \pertMedium{} & $100{,}559$ & $0.072$ & $0.097$ & $0.146$ & $0.434$ & $0.727$ & $1.252$ \\
        \pertMajor{}  & $131{,}037$ & $0.118$ & $0.187$ & $0.254$ & $0.923$ & $1.935$ & $2.934$ \\
        \bottomrule
    \end{tabular}
\end{table}

\subsubsection{Gemma Scope}

The Gemma Scope \citep{lieberum2024gemmascope} uses JumpReLU as an activation function to obtain sparsely activated features in its \IRN{}s \citep{rajamanoharan2024jumping}.
JumpReLU is a generalization of ReLU and is given by:
\begin{equation}
    \mathrm{JumpReLU}(x;\; \theta)= \begin{cases} x & \text{if } x > \theta, \\ 0 & \text{otherwise,}  \end{cases}
\end{equation}
where $\theta\in\R$ is a trainable parameter determining a discontinuous jumping point.
As this activation function is applied elementwise, many existing verifiers already support JumpReLU or can add support with little effort.
\Cref{fig:gemma_jumprelu}a shows a histogram of $\theta$ per layer of \GemmaTwo, where $\theta$ seems to increase with the layer index $\layerIdx$.
Unfortunately, a higher $\theta$ means that the discontinuous jump will be larger as well, and thus an enclosure might become overly conservative (\cref{fig:gemma_jumprelu}b).

\begin{figure}
    \centering
    \includetikz{./figures/evaluation/gemma_jumprelu_drift/gemma-jumprelu-drift}%
    \includetikz{./figures/evaluation/gemma_jumprelu_enclosure/gemma-jumprelu-enclosure}%
    \caption{\textbf{JumpReLU.} (a) Histogram of jump parameter $\theta$ per layer of \GemmaTwo. (b) Example zonotopic enclosure with a discontinuous jump at $\theta=2$.}
    \label{fig:gemma_jumprelu}
\end{figure}

\paragraph{Results.}
\Cref{fig:gemma-gap} shows the per-layer faithfulness gap for both \GemmaTwo{} and \GemmaThree{} \SAE{}s across all $26$ layers:
The certified gap is sound throughout and widens with depth, but the difference between $\deltaPGD$ and $\deltaVerified$ is larger than for the smooth \GPT{} transcoder despite all having similar dimensions (\cref{tab:model_specs}).
We attribute this to the conservativeness of enclosing a discontinuous \JumpReLU{} gate (\cref{fig:gemma_jumprelu}b),
which again emphasizes the need for verification-friendly architectures.

\begin{figure}[h]
    \centering
    \includetikz{./figures/evaluation/faithfulness_gap_gemma/faithfulness_gap_gemma}%
    \caption{\textbf{Gemma Scope.} Faithfulness gap of the \SAE{}s.}
    \label{fig:gemma-gap}
\end{figure}

\subsubsection{\Qwen{}}
\label{sec:qwen}

Finally, we determine the faithfulness gap on the last model of our suite: \Qwen{} \SAE{} \citep{deepseekai2025r1,qwen2025qwen25}, a distilled reasoning model.
Its \IRN{} is an EleutherAI \TopK{} \SAE{} of the same family as \Llama{}'s, so verification reuses the \TopK{} enclosure of \cref{sec:scalability}.
The architecture differs from \Llama{} only in scale (\cref{tab:model_specs,tab:irn_specs}): a narrower residual stream ($\numModelDim=1{,}536$) over more layers ($\numLayers=28$), a smaller dictionary ($\numIRNdim=65{,}536$ vs.\ $131{,}072$), and the same selection budget $\numTop=32$.

\paragraph{Results.}
\Cref{fig:qwen-gap} shows the per-layer faithfulness gap at the \pertMinor{} level across all $28$ layers of the \Qwen{} \SAE{}, over the same $200$ samples as the other models.
We note that the \SAE{} of the \Qwen{} model is an order of magnitude less faithful than the \SAE{} of the \Llama{} model (\cref{fig:scalability-gap}).
As with all non-elementwise \TopK{} \IRN{}s, the gap stays far above that of \GPT{}, reinforcing the case for verification-friendly designs.

\begin{figure}[h]
    \centering
    \includetikz{./figures/evaluation/faithfulness_gap_qwen/faithfulness_gap_qwen}%
    \caption{\textbf{DeepSeek-Qwen.} Faithfulness gap of the \TopK{} \SAE{}.}
    \label{fig:qwen-gap}
\end{figure}

\subsubsection{Faithfulness Training across Layers}

In this experiment, we extend the experiment in \cref{sec:way_forward} to provide further insights into the benefit of verification-aware training over adversarial training across more layers of \GPT{},
in particular, as we noticed in \cref{fig:verified-vs-pgd} that the faithfulness gap seems to increase for later layers%
---a trend that we also observed for all other models in our suite (\cref{fig:scalability-gap}, \cref{fig:gemma-gap}, \cref{fig:qwen-gap}).

\paragraph{Results.}
\Cref{fig:training-delta-layers} clearly shows that verification-aware training using sets keeps the faithfulness gap small,
which is not obtainable for other adversarial training methods.
Please note that we used the same training hyperparameters at every layer.

\begin{figure}[h]
    \centering
    \includetikz{./figures/evaluation/training_delta_layers/training_delta_layers}
    \caption{\textbf{Faithfulness training across layers.}
    Faithfulness gap per training regime on the \GPT{} \SAE{} across layers.}
    \label{fig:training-delta-layers}
\end{figure}

\end{document}